\documentclass[sigconf, authorversion=true]{}

\usepackage[T1]{fontenc}
\usepackage{booktabs} 
\usepackage{tabularx}
\usepackage{graphicx}
\usepackage{subcaption}

\usepackage{blindtext, graphicx}
\usepackage{algorithm,algorithmic}
\usepackage[algo2e]{algorithm2e} 
\renewcommand{\algorithmiccomment}[1]{\bgroup\hfill//~#1\egroup}
\usepackage[cmex10]{mathtools}
\usepackage{subcaption}
\usepackage{amssymb}
\usepackage{amsmath}
\usepackage{amsthm}

\usepackage{empheq}
\usepackage{xr}

\usepackage{paralist}

\newtheorem{problem}{Problem}

\newtheorem*{theorem*}{Theorem}

\AtBeginDocument{%
  \providecommand\BibTeX{{%
    \normalfont B\kern-0.5em{\scshape i\kern-0.25em b}\kern-0.8em\TeX}}}

\copyrightyear{2020}
\acmYear{2020}
\setcopyright{acmlicensed}\acmConference[KDD '20]{Proceedings of the 26th
ACM SIGKDD Conference on Knowledge Discovery and Data Mining}{August
23--27, 2020}{Virtual Event, CA, USA}
\acmBooktitle{Proceedings of the 26th ACM SIGKDD Conference on Knowledge
Discovery and Data Mining (KDD '20), August 23--27, 2020, Virtual Event, CA,
USA}
\acmPrice{15.00}
\acmDOI{10.1145/3394486.3403173}
\acmISBN{978-1-4503-7998-4/20/08}



\settopmatter{printacmref=true}
\begin{document}
\fancyhead{}

\title{Block Model Guided Unsupervised Feature Selection}

\author{Zilong Bai}
\orcid{}
\affiliation{%
  \institution{University of California, Davis}
}
\email{zlbai@ucdavis.edu}

\author{Hoa Nguyen}
\orcid{}
\affiliation{%
  \institution{University of California, Davis}
}
\email{hoanguyen@ucdavis.edu}

\author{Ian Davidson}
\affiliation{%
  \institution{University of California, Davis}
}
\email{davidson@cs.ucdavis.edu}
\renewcommand{\shortauthors}{Z. Bai, et al.}

\begin{abstract}
Feature selection is a core area of data mining with a recent innovation of graph-driven unsupervised feature selection for linked data. In this setting we have a dataset $\mathbf{Y}$ consisting of $n$ instances each with $m$ features and a corresponding $n$ node graph (whose adjacency matrix is $\mathbf{A}$) with an edge indicating that the two instances are similar. Existing efforts for unsupervised feature selection on attributed networks have explored either directly regenerating the links by solving for $f$ such that $f(\mathbf{y}_i,\mathbf{y}_j) \approx \mathbf{A}_{i,j}$ or finding community structure in $\mathbf{A}$ and using the features in $\mathbf{Y}$ to predict these communities. However, graph-driven unsupervised feature selection remains an understudied area with respect to exploring more complex guidance. Here we take the novel approach of first building a block model on the graph and then using the block model for feature selection. That is, we discover $\mathbf{F}\mathbf{M}\mathbf{F}^T \approx \mathbf{A}$ and then find a subset of features $\mathcal{S}$ that induces another graph to preserve both $\mathbf{F}$ and $\mathbf{M}$. We call our approach Block Model Guided Unsupervised Feature Selection (BMGUFS). Experimental results show that our method outperforms the state of the art on several real-world public datasets in finding high-quality features for clustering. 
\end{abstract}


\begin{CCSXML}
<ccs2012>
   <concept>
       <concept_id>10010147.10010257.10010321.10010336</concept_id>
       <concept_desc>Computing methodologies~Feature selection</concept_desc>
       <concept_significance>500</concept_significance>
       </concept>
 </ccs2012>
\end{CCSXML}

\ccsdesc[500]{Computing methodologies~Feature selection}
\keywords{Unsupervised Feature Selection; Attributed Networks; Block Model}

\maketitle

\section{Introduction}
The area of feature selection is a critical initial step in data mining and vital for its success. It has been extensively studied \cite{li2017feature} with a recent innovation of graph driven feature selection where in addition to an $m$ featured data set $\mathbf{Y}$ of $n$ instances, we are given an $n$ node graph whose adjacency matrix between instances is $\mathbf{A}$. Here the graph represents instance similarity such that if $\mathbf{A}_{a,b} \geq \mathbf{A}_{i,j}$ then instances  $a$ and $b$ are more similar than instances $i$ and $j$. This allows a rich source of guidance for the feature selection process.

Such a setting is not unusual in modern data mining particularly given the proliferation of attributed networks in various domains ranging from social media (e.g., Twitter \cite{taxidou2014online}) to biochemistry (e.g., protein-protein interacting networks \cite{safari2014protein}). In these settings the nodes are accompanied by a collection of features (an $n \times m$ feature matrix $\mathbf{Y}$) in addition to relational network topology (an $n \times n$ adjacency matrix $\mathbf{A}$). 
A challenge in these domains is that the nodal attributes can be a noisy/irrelevant or even redundant  high-dimensional feature space. This can yield suboptimal solutions if we assume all the features associated with the nodes and the graph structure are complementary \cite{sanchez2015efficient, zhe2019community}. 

Existing work to address this challenge takes two broad directions to make use of the graph. The first (micro-level) is to learn a function that maps the feature vectors of two instances to a value that approximates their edge weight in the graph, that is $f(\mathbf{y}_i,\mathbf{y}_j) \approx \mathbf{A}_{i,j}$ (e.g., \cite{wei2015efficient, wei2016unsupervised, li2019adaptive}). A second direction (macro-level) includes finding communities from $\mathbf{A}$ either explicitly (e.g. \cite{tang2012unsupervised}) or implicitly (e.g., \cite{li2016robust}) and selecting features to predict them.
Instead, we take the novel approach of finding a block model $\mathbf{F} \mathbf{M} \mathbf{F}^T \approx \mathbf{A}$ and use the block model $\mathbf{F} ~\text{and}~ \mathbf{M}$ to guide the feature selection.
This is different from existing work in two ways. Firstly, clustering and block modeling are not the same, as in block modeling two instances are placed in the same block if they are structurally equivalent (e.g., second-order proximity \cite{zhang2018network, tang2015line}), not if they belong to the same densely connected subgraph (i.e., intra-community proximity \cite{zhang2018network,girvan2002community}). Secondly, a block model effectively denoises the graph and hence removes noisy edges. See Figure \ref{fig:block_model_guided_feature_selection_diagram} for an illustration of our work. 

Our major contributions are:
\begin{compactenum}
\item We propose a novel block-model driven formulation for feature selection (section \ref{sec:formulation}).
\item We derive an effective numerical optimization framework for our formulation (section \ref{sec:solver}).
\item We empirically demonstrate the usefulness of our method and investigate its potential via extensive experiments on several real-world public datasets (section \ref{sec:experiments}). 
\begin{compactenum}
\item We demonstrate the effectiveness of our method in finding high-quality features to facilitate K-means clustering. Our method outperforms the baselines on various real-world public datasets (section \ref{sec:effectiveness}).
\item We conduct in-depth analysis on the sensitivity of our method w.r.t. the block models generated from the structural graph to gain insights for future endeavor beyond our explorations (section \ref{sec:block_model_effect}).
\end{compactenum}
\end{compactenum}

\begin{figure*}[ht!]
\centering
\includegraphics[width=5in]{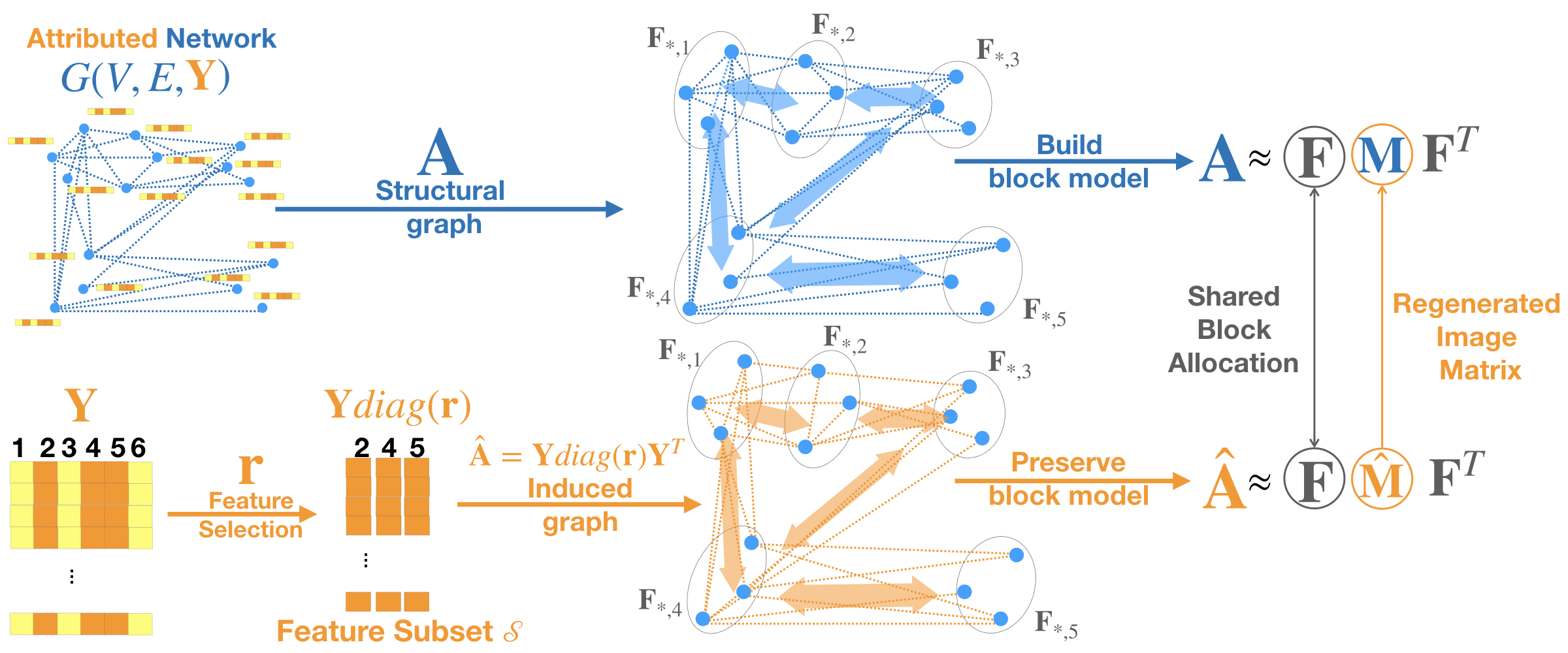}
\caption{Diagram of our proposed BMGUFS. In the above attributed network $G(V, E, \mathbf{Y})$, we first compute the block model $\mathbf{F}$ and $\mathbf{M}$ for the structural graph $\mathbf{A}$. We then select features $\mathcal{S}$ to induce graph $\hat{\mathbf{A}} = \mathbf{Y} diag(\mathbf{r}) \mathbf{Y}^T$ to maximally preserve $\mathbf{F}$ and $\mathbf{M}$. Particularly, features in orange (i.e., $2,4,5$) are selected such that on their induced $\hat{\mathbf{A}}$: (1) the given block allocation $\mathbf{F}$  minimally violates structural equivalence and (2) the image matrix $\hat{\mathbf{M}}$ corresponded to $\mathbf{F}$ maximally regenerates $\mathbf{M}$.} \label{fig:block_model_guided_feature_selection_diagram}
\end{figure*}

We begin the rest of the paper by presenting the problem setting in section \ref{sec:problem_setting}. We then formulate our BMGUFS as an optimization problem in section \ref{sec:formulation} and derive a highly effective algorithm in section \ref{sec:solver}. We present the results of our extensive experiments in section \ref{sec:experiments}. We then briefly review highly related work in section \ref{sec:related_work}. We conclude the paper and discuss future directions in section \ref{sec:conclusion}.

\section{Problem Setting}\label{sec:problem_setting}
In this section we first establish our notation in Table \ref{tab:notations}. We then present several concepts as preliminaries of our Block Model Guided Unsupervised Feature Selection problem. We formally define the novel feature selection problem we explore in Problem \ref{def:block_model_guided_feature_selection}.

\begin{table}[b]
\begin{tabular}{ |p{0.08\textwidth} | p{0.36\textwidth} |}
\hline 
\textbf{Notation} & \textbf{Definition}  \\
\hline 
$n$ & Number of nodes. \\
\hline 
$m$ & Number of original features. \\
\hline 
$d$ & Number of selected features. \\
\hline 
$k$ & Number of blocks in a block model. \\
\hline 
$\mathcal{Y}$ & Original feature set. \\
\hline 
$\mathcal{S}$ & Selected feature set. \\
\hline 
$\mathbf{Y}^{n \times m}$ & Feature matrix of all $n$ nodes. \\
\hline
$\mathbf{A}^{n \times n}$ & Adjacency matrix of the structural graph for the attributed network. \\
\hline
$\hat{\mathbf{A}}^{n \times n}$ & Adjacency matrix of the similarity graph induced from $\mathcal{S}$. \\
\hline
$\mathbf{F}^{n \times k}$ & Block allocation matrix with $k$-blocks stacked in columns. \\
\hline
$\mathbf{M}^{k \times k}$ & Image matrix on the structural graph. \\
\hline
$\hat{\mathbf{M}}^{k \times k}$ & Image matrix on the induced graph. \\
\hline
$\mathbf{r} \in \{0,1\}^m$ & The feature selection indicator vector. \\
\hline
$\mathbf{r}\in [0,1]^m$ & Importance scores for all the features.  \\
\hline
\end{tabular}\caption{Notations and Definitions} \label{tab:notations}  
\end{table}

We denote matrices as boldface capital letters (e.g., $\mathbf{X}$), vectors as boldface lowercase letters (e.g., $\mathbf{x}$), scalars as regular lowercase letters (e.g., $x$). We index the $i$-th entry of vector $\mathbf{x}$ with $x_i$, the $i$-th row of $\mathbf{X}$ with $\mathbf{X}_{i,*}$, the $j$-th column of this matrix with $\mathbf{X}_{*,j}$, the $(i,j)$ entry in $\mathbf{X}$ with $\mathbf{X}_{i,j}$. We use either $\mathbf{X}'$ or $\mathbf{X}^T$ to denote the transpose of $\mathbf{X}$. We use $tr(\mathbf{X})$ to denote the trace of square matrix $\mathbf{X}$. We follow \texttt{MATLAB} syntax to use $diag(\bullet)$ for \emph{either} diagonalization when $\bullet$ is a vector $\mathbf{x}$ \emph{or} extracting the $m$ diagonal entries as a vector when $\bullet$ is a square matrix $\mathbf{X}^{m \times m}$. We use $\mathbf{1}$ to denote a vector with all elements being $1$, $\bar{\mathbf{1}} = \frac{\mathbf{1}}{\| \mathbf{1} \|_2}$. For matrix/vector computations, we use $\otimes$ for Kronecker product, $\odot$ for Hadamard (element-wise) product, and $\frac{\bullet}{\bullet}$ for element-wise division. Horizontal concatenation of two matrices refers to regular matrix product. We use $\mathbf{X}(\mathbf{r})$ to denote a matrix $\mathbf{X}$ is a matrix function of $\mathbf{r}$. Function $nnz(\mathbf{r})$ counts the number of non-zero entries in $\mathbf{r}$.

\begin{definition}[\textbf{Attributed Network}]\label{def:attributed_network}
An attributed network $G(V, E, \mathbf{Y})$ consists of the set of $n$ nodes $V$, the set of links $E \subset V \times V$, and $\mathbf{Y}^{n \times m}$ where $\mathbf{Y}_{i,*}$ is the $m$-dimensional feature/attribute vector of node $v_i$. The adjacency matrix of the \textbf{Structural Graph} (i.e., the raw network topology) is $\mathbf{A}$.
\end{definition}

\begin{definition}[\textbf{Graph Induced by $\mathcal{S}$}]\label{def:behavioral_graph}  
Let $\mathcal{S}$ be the subset of $d$ features selected from the original $m$-dimensional feature space $\mathcal{Y}$. The graph induced by $\mathcal{S}$ is defined by the similarity between nodes in $\mathcal{S}$.
Formally, let $\mathbf{r} \in \{0,1\}^m$ indicating the $l-$th feature is selected \texttt{iff} $r_l = 1$, otherwise $0$, the adjacency matrix of the graph induced by $\mathcal{S}$ is defined as $\hat{\mathbf{A}} = \mathbf{Y} diag(\mathbf{r}) \mathbf{Y}^T$ for this paper.
\end{definition}

\begin{definition}[\textbf{Block Model of Graph $\mathbf{A}$}]\label{def:block_model_result}
A block modeling result of graph $\mathbf{A}^{n \times n}$ that partitions its node set $V$ into $k$ blocks consists of a block allocation matrix $\mathbf{F} \in \{0,1\}^{n \times k}$ and an image/mixing matrix $\mathbf{M} \in R_+^{k \times k}$, s.t. $\mathbf{F}, \mathbf{M}$ (approximately) minimize $\| \mathbf{A} - \mathbf{F} \mathbf{M} \mathbf{F}^T \|_F$. The image matrix $\mathbf{M}$ corresponded to $\mathbf{F}$ on graph $\mathbf{A}$ is $\underset{\mathbf{X}}{argmin} \| \mathbf{A} - \mathbf{F} \mathbf{X} \mathbf{F}^T \|_F$.
\end{definition}

\begin{problem}[\textbf{Block Model Guided Unsupervised Feature Selection}]\label{def:block_model_guided_feature_selection}

\noindent \textbf{Input:} Feature matrix $\mathbf{Y}$ for $n$ nodes in the original feature space $\mathcal{Y}$ with $m$ features, block model $\mathbf{F}$ and $\mathbf{M}$ precomputed from the adjacency matrix $\mathbf{A}$ of the structural graph over the $n$ nodes. \\
\noindent \textbf{Optimization:}  Find a subset of $d$ features $\mathcal{S}$ from $\mathcal{Y}$ ($d << m$), such that the graph induced from $\mathcal{S}$ (i.e., $\hat{\mathbf{A}}$) maximally preserves $\mathbf{F}$ and $\mathbf{M}$. \\
\noindent \textbf{Output:} An $m$-dimensional feature selection indicator vector $\mathbf{r} \in \{0,1\}^m$ where $r_l = 1$ \texttt{iff} feature $l$ from $\mathcal{Y}$ is in $\mathcal{S}$, $0$ otherwise.
\end{problem}

\section{Formulation}\label{sec:formulation}
In this section we formulate the Block Model Guided Unsupervised Feature Selection as an optimization problem. We aim to find a subset of features $\mathcal{S}$, such that a given block model $\mathbf{F}, \mathbf{M}$ precomputed for the structural graph $\mathbf{A}$ is maximally preserved on the graph $\hat{\mathbf{A}}$ induced by $\mathcal{S}$. This consists of two objectives: (1) block allocation $\mathbf{F}$ minimally violates structural equivalence on $\hat{\mathbf{A}}$ and (2) the image matrix $\hat{\mathbf{M}}$ corresponded to $\mathbf{F}$ on $\hat{\mathbf{A}}$ regenerates the given $\mathbf{M}$ up to scaling. We model the two objectives as $\mathcal{L}_b$ (section \ref{sec:preserve_F}) and $\mathcal{L}_m$ (section \ref{sec:regenerate_M}) respectively. 

Based on theorem \ref{theorem:closed_form_M}, $\hat{\mathbf{M}} (\mathbf{r})$ is a matrix function of feature selection vector $\mathbf{r}$ given $\mathbf{Y}$ and $\mathbf{F}$ (equation \ref{eq:Mhat_function_of_r}). Therefore, both $\mathcal{L}_b$ and $\mathcal{L}_m$ are functions of $\mathbf{r}$ without involving an independent variable matrix to model the image matrix $\hat{\mathbf{M}}$ corresponded to $\mathbf{F}$ on the induced graph.

\begin{equation}\label{eq:Mhat_function_of_r}
\begin{aligned}
\hat{\mathbf{M}}(\mathbf{r}) & = \underset{\mathbf{X}}{argmin} \| \mathbf{Y} diag(\mathbf{r}) \mathbf{Y}^T - \mathbf{F} \mathbf{X} \mathbf{F}^T \|_F \\
& = ( \mathbf{F}^T \mathbf{F})^{-1}\mathbf{F}^T \mathbf{Y} diag(\mathbf{r}) \mathbf{Y}^T \mathbf{F} ( \mathbf{F}^T \mathbf{F})^{-1} \\
\end{aligned}
\end{equation}

\begin{theorem}[Least Squares Optimal $\mathbf{M}$ in Closed Form] \label{theorem:closed_form_M}
Given $\mathbf{A} \in R^{n \times n}$, $\mathbf{F} \in \{0,1\}^{n\times k}$.  If $\mathbf{D} = \mathbf{F}^T \mathbf{F}$ is a diagonal matrix with positive diagonal elements, then 
\begin{equation}\label{eq:M_close_form_least_squares}
\underset{\mathbf{X}}{argmin} \| \mathbf{A} - \mathbf{F} \mathbf{X} \mathbf{F}^T \|^2_F = \mathbf{D}^{-1}\mathbf{F}^T \mathbf{A} \mathbf{F} \mathbf{D}^{-1}
\end{equation}
\end{theorem}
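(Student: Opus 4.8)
The plan is to treat this as an ordinary linear least-squares problem in the $k^2$ unknown entries of $\mathbf{X}$ and solve it through the normal equations. The cleanest bookkeeping comes from vectorization: using $\mathrm{vec}(\mathbf{F}\mathbf{X}\mathbf{F}^T) = (\mathbf{F}\otimes\mathbf{F})\,\mathrm{vec}(\mathbf{X})$, the objective becomes $\|\mathrm{vec}(\mathbf{A}) - (\mathbf{F}\otimes\mathbf{F})\,\mathrm{vec}(\mathbf{X})\|_2^2$, a convex quadratic in $\mathrm{vec}(\mathbf{X})$. First I would write down its normal equations, $(\mathbf{F}\otimes\mathbf{F})^T(\mathbf{F}\otimes\mathbf{F})\,\mathrm{vec}(\mathbf{X}) = (\mathbf{F}\otimes\mathbf{F})^T\mathrm{vec}(\mathbf{A})$, which characterize every global minimizer.

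Next I would simplify the Gram matrix with the Kronecker mixed-product rule: $(\mathbf{F}\otimes\mathbf{F})^T(\mathbf{F}\otimes\mathbf{F}) = (\mathbf{F}^T\mathbf{F})\otimes(\mathbf{F}^T\mathbf{F}) = \mathbf{D}\otimes\mathbf{D}$. By hypothesis $\mathbf{D}$ is diagonal with strictly positive diagonal, so $\mathbf{D}\otimes\mathbf{D}$ is diagonal and invertible; equivalently $\mathbf{F}$ has full column rank and the least-squares solution is unique. Inverting and again applying the mixed-product rule gives $\mathrm{vec}(\mathbf{X}) = (\mathbf{D}^{-1}\otimes\mathbf{D}^{-1})(\mathbf{F}^T\otimes\mathbf{F}^T)\mathrm{vec}(\mathbf{A}) = \big((\mathbf{D}^{-1}\mathbf{F}^T)\otimes(\mathbf{D}^{-1}\mathbf{F}^T)\big)\,\mathrm{vec}(\mathbf{A})$, and folding this back with $\mathrm{vec}(\mathbf{B}\mathbf{C}\mathbf{E}) = (\mathbf{E}^T\otimes\mathbf{B})\mathrm{vec}(\mathbf{C})$ (taking $\mathbf{B} = \mathbf{D}^{-1}\mathbf{F}^T$, $\mathbf{C} = \mathbf{A}$, $\mathbf{E} = \mathbf{F}\mathbf{D}^{-1}$, and using that $\mathbf{D}^{-1}$ is symmetric) yields $\mathbf{X} = \mathbf{D}^{-1}\mathbf{F}^T\mathbf{A}\mathbf{F}\mathbf{D}^{-1}$, as claimed.

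As a sanity check I would also carry out the direct matrix-calculus derivation: writing $g(\mathbf{X}) = \mathrm{tr}\big((\mathbf{A}-\mathbf{F}\mathbf{X}\mathbf{F}^T)^T(\mathbf{A}-\mathbf{F}\mathbf{X}\mathbf{F}^T)\big)$, one finds $\nabla_{\mathbf{X}} g = -2\,\mathbf{F}^T(\mathbf{A}-\mathbf{F}\mathbf{X}\mathbf{F}^T)\mathbf{F}$, and setting it to zero gives $\mathbf{F}^T\mathbf{A}\mathbf{F} = (\mathbf{F}^T\mathbf{F})\mathbf{X}(\mathbf{F}^T\mathbf{F}) = \mathbf{D}\mathbf{X}\mathbf{D}$, hence $\mathbf{X} = \mathbf{D}^{-1}\mathbf{F}^T\mathbf{A}\mathbf{F}\mathbf{D}^{-1}$ once more. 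Since $g$ is convex in $\mathbf{X}$ (an affine map composed with a convex quadratic), this stationary point is a global minimizer. There is no deep obstacle here --- the work is purely routine linear algebra; the one point that deserves care is invoking the positivity of $\mathrm{diag}(\mathbf{D})$ to guarantee both that $\mathbf{D}^{-1}$ exists and that the minimizer is unique (otherwise the statement would need a Moore--Penrose pseudoinverse and ``an $\mathrm{argmin}$'' rather than ``the $\mathrm{argmin}$'').
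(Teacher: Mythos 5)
Your proposal is correct and follows essentially the same route as the paper's own proof: vectorize the objective via $\mathrm{vec}(\mathbf{F}\mathbf{X}\mathbf{F}^T)=(\mathbf{F}\otimes\mathbf{F})\mathrm{vec}(\mathbf{X})$, solve the normal equations, simplify with the Kronecker mixed-product rule to get $(\mathbf{D}^{-1}\mathbf{F}^T)\otimes(\mathbf{D}^{-1}\mathbf{F}^T)$, and fold back. The additional matrix-calculus sanity check and the explicit remark about uniqueness are fine but not needed beyond what the paper already does.
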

\begin{proof}
See Appendix \ref{sec:proof}.
\end{proof}

\subsection{Preserving Structural Equivalence with $\mathbf{F}$}\label{sec:preserve_F}
Here we aim to find a feature subset $\mathbf{S}$ such that on its induced graph $\hat{\mathbf{A}}$, block allocation $\mathbf{F}$ minimally violates the structural equivalence. According to \cite{mattenet2019generic}, the reconstruction error $\| \hat{\mathbf{A}} - \mathbf{F} \hat{\mathbf{M}} \mathbf{F}^T \|$ quantifies the violation of structural equivalence in using $\mathbf{F}$ and $\hat{\mathbf{M}}$ to model $\hat{\mathbf{A}}$. Note that the scale of absolute reconstruction error favors fewer entries in $\mathbf{r}$ to be positive, which can yield trivial solutions (e.g., $\mathbf{r} = \mathbf{0}$) instead of exploring more meaningful block models. Therefore, we model the loss term in equation \ref{eq:block_model_behavioral_loss} with the relative reconstruction error for $\hat{\mathbf{A}} = \mathbf{Y} diag(\mathbf{r}) \mathbf{Y}^T$. In equation \ref{eq:block_model_behavioral_loss}, $\mathbf{D} = \mathbf{F}^T \mathbf{F}$.

\begin{equation}\label{eq:block_model_behavioral_loss}
\begin{aligned}
\mathcal{L}_b(\mathbf{r}) = & \frac{\| \mathbf{Y}diag(\mathbf{r})\mathbf{Y}^T - \mathbf{F} ~ \hat{\mathbf{M}}(\mathbf{r}) ~ \mathbf{F}^T \|^2_F}{\| \mathbf{Y}diag(\mathbf{r})\mathbf{Y}^T \|^2_F}\\
\overset{(Eq. \ref{eq:Mhat_function_of_r})}{=} & \frac{\| \mathbf{Y} diag(\mathbf{r}) \mathbf{Y}^T - \mathbf{F} \mathbf{D}^{-1} \mathbf{F}^T \mathbf{Y} diag(\mathbf{r}) \mathbf{Y}^T \mathbf{F} \mathbf{D}^{-1} \mathbf{F}^T \|^2_F}{\| \mathbf{Y} diag(\mathbf{r}) \mathbf{Y}^T \|^2_F} 
\end{aligned}
\end{equation}

\subsection{Regenerating Image Matrix $\mathbf{M}$}\label{sec:regenerate_M}
Here we aim to find a feature subset $\mathcal{S}$ such that the image matrix $\hat{\mathbf{M}}$ corresponded to $\mathbf{F}$ on the graph $\hat{\mathbf{A}}$ induced by $\mathcal{S}$ (approximately) regenerates the given $\mathbf{M}$.
The underlying premise is that given the same block allocation $\mathbf{F}$, we want the block-level similarity (i.e., $\hat{\mathbf{M}}(\mathbf{r})$) on $\hat{\mathbf{A}}$ to respect the block-level connectivity (i.e., $\mathbf{M}$) on the structural graph $\mathbf{A}$. 
This translates to $C \odot \hat{\mathbf{M}} \approx \mathbf{M}$ where $C$ is a scalar that compensates for the scaling difference between the two image matrices. 
It is challenging to directly model and solve for $C$ as it is not only unknown but also dynamic as the scale of $\hat{\mathbf{M}}(\mathbf{r})$ changes with $\mathbf{r}$. 
Therefore, we define distance between $\hat{\mathbf{M}}(\mathbf{r})$ and $\mathbf{M}$ in equation \ref{eq:matching_image_matrices} invariant to scaling. 
We use $\mathcal{D}_{KL}(\bullet || \bullet)$ to denote the KL-divergence \cite{kullback1997information} between two discrete probabilistic distributions. In this paper we only consider block modeling result whose image matrix $\mathbf{M}$ \emph{does not} contain absolutely zero elements as the presence of absolute zero entries in $\mathbf{M}$ can pose additional challenges to understanding the stochastic properties of the block model \cite{abbe2017community}. 
\begin{equation}\label{eq:matching_image_matrices}
\begin{aligned}
\mathcal{L}_m(\mathbf{r}) & = \underset{\forall i \in [k]}{\Sigma} \mathcal{D}_{KL}(~ \mathbf{q}_i(\mathbf{r}) ~ ||~ \mathbf{p}_i ~) \\
\text{where} ~ &  ~ \mathbf{p}_i = \mathbf{P}_{i,*}, ~ \mathbf{q}(\mathbf{r})_i = \mathbf{Q}(\mathbf{r})_{i,*},  \\
 ~ & ~ \mathbf{P}_{i,j} = \frac{\mathbf{M}_{i,j}}{\underset{j' \in [k]}{\Sigma}~\mathbf{M}_{i,j'}}, ~ \mathbf{Q}(\mathbf{r})_{i,j} = \frac{\hat{\mathbf{M}}(\mathbf{r})_{i,j}}{\underset{j' \in [k]}{\Sigma}~\hat{\mathbf{M}}(\mathbf{r})_{i,j'}} \\
\end{aligned}
\end{equation}

\noindent \textbf{A Statistical Interpretation.} In the given $\mathbf{M}$, $\mathbf{M}_{i,j} \in [0,1]$ can be interpreted as the empirical probability of having an edge connecting two nodes between blocks $\mathbf{F}_{*,i}$ and $\mathbf{F}_{*,j}$. This induces the conditional probability given $\mathbf{F}_{*,i}$ to connect with $\mathbf{F}_{*,j}$ as $\mathbf{P}_{i,j} = \frac{\mathbf{M}_{i,j}}{\underset{j' \in [k]}{\Sigma} \mathbf{M}_{i,j'}}$. Assuming $\hat{\mathbf{M}}(\mathbf{r}) \propto \mathbf{M}$, we can define $\mathbf{Q}(\mathbf{r})_{i,j} = \frac{\hat{\mathbf{M}}(\mathbf{r})_{i,j}}{\underset{j' \in [k]}{\Sigma} \hat{\mathbf{M}}(\mathbf{r})_{i,j'}}$ to model the conditional probability given $\mathbf{F}_{*,i}$ to connect to $\mathbf{F}_{*,j}$ on the induced graph. Thus equation \ref{eq:matching_image_matrices} models the overall KL-divergence between the conditional probabilities of connectivity on the original graph $\mathbf{A}$ and the induced graph $\hat{\mathbf{A}}(\mathbf{r})$ at the block-level.

\subsection{A Joint Formulation}
We aim to holistically utilize both the block allocation $\mathbf{F}$ and the image matrix $\mathbf{M}$ to regularize the macro-level structure of the graph induced by the selected features.
Therefore, we combine $\mathcal{L}_b$ and $\mathcal{L}_m$ into a unified optimization framework in equation \ref{eq:formulation_binary_r} with an adaptive weighting factor $\dot{\beta} \geq 0$ \footnote{This is not the hyper-parameter $\bar{\beta} \in [0,1]$ for our algorithm \ref{alg:bmgufs}.}.
\begin{equation}\label{eq:formulation_binary_r} 
\begin{aligned}
\underset{\mathbf{r}}{Minimize}  & ~ \mathcal{L} = \mathcal{L}_b + \dot{\beta} \mathcal{L}_m \\
s.t. & ~ \mathbf{r} \in \{0,1\}^m, ~ \mathbf{r}^T \mathbf{1} = d
\end{aligned}
\end{equation}

To side step the potential intractability caused by combinatorial optimization, we relax the domain of $\mathbf{r}$ from  $\{0,1\}^m$ to $[0,1]^m$. The resulting $\mathbf{r}$ can be interpreted as importance scores for ranking the features. We then follow the convention of \cite{li2019adaptive} to rewrite the cardinality constraint $\mathbf{r}^T \mathbf{1} = d$ in the Lagrangian, resulting in the following constrained optimization problem with $l$-1 norm regularization (where $\gamma$ denotes the weight for sparsity penalty). We further notice that both $\mathcal{L}_b$ and $\mathcal{L}_m$ are invariant to the $l$-2 norm of $\mathbf{r}$. Therefore, we introduce $l$-2 norm constraint $\|\mathbf{r}\|_2=1$ to confine the search domain for our gradient-descent based algorithm. Equation \ref{eq:formulation_norm_r} presents the resulting relaxed formulation.

\begin{equation}\label{eq:formulation_norm_r}
\begin{aligned}
\underset{\mathbf{r}}{Minimize}  & ~ \mathcal{L} = \mathcal{L}_b + \dot{\beta} \mathcal{L}_m + \gamma \| \mathbf{r} \|_1\\
s.t. & ~ \mathbf{r} \geq \mathbf{0}, ~~ \|\mathbf{r}\|_2 = 1
\end{aligned}
\end{equation}

\section{Solver}\label{sec:solver}
In this section we derive an effective solver for equation \ref{eq:formulation_norm_r} to find a feature selection vector $\mathbf{r}$ given block model $\mathbf{F}, \mathbf{M}$.
Firstly, we compute the partial derivatives of $\mathcal{L}_b$ and $\mathcal{L}_m$ w.r.t. $\mathbf{r}$. We then suggest an update rule for $\mathbf{r}$ based on a weighted combination of the \emph{normalized} gradients. We summarize our optimization framework in algorithm \ref{alg:bmgufs}.

The derivation of $\frac{\partial \mathcal{L}_b}{\partial \mathbf{r}}$ is relatively straightforward - we induce equation \ref{eq:gradient_L_b} from equation \ref{eq:pLb_pr}.

\begin{equation}\label{eq:pLb_pr}
\begin{aligned}
\frac{\partial \mathcal{L}_b}{\partial \mathbf{r}} = & \frac{1}{\| \mathbf{Y} ~ \mathbf{R} ~ \mathbf{Y}' \|^4_F}(\| \mathbf{Y} ~ \mathbf{R} ~ \mathbf{Y}' \|^2_F \frac{\partial \| \mathbf{Y}diag(\mathbf{r})\mathbf{Y}^T - \mathbf{F} ~ \hat{\mathbf{M}}(\mathbf{r}) ~ \mathbf{F}^T \|^2_F}{\partial \mathbf{r}} \\
& -  \| \mathbf{Y}diag(\mathbf{r})\mathbf{Y}^T - \mathbf{F} ~ \hat{\mathbf{M}}(\mathbf{r}) ~ \mathbf{F}^T \|^2_F  \frac{\partial \| \mathbf{Y} ~ \mathbf{R} ~ \mathbf{Y}' \|^2_F}{\partial \mathbf{r}})
\end{aligned}
\end{equation}

\smallskip
\noindent\fbox{\begin{minipage}{26em}
\noindent \textbf{Gradient of $\mathcal{L}_b$ over $\mathbf{r}$}

\begin{equation}\label{eq:gradient_L_b}
\begin{aligned}
\frac{\partial \mathcal{L}_b}{\partial \mathbf{r}} ~ 
& = \frac{2 diag( \mathbf{Y}' ~ \mathbf{Y} ~ \mathbf{R} ~ \mathbf{Y}' ~ \mathbf{Y}  + \hat{\mathbf{Y}}' ~ \hat{\mathbf{Y}} ~ \mathbf{R} ~ \hat{\mathbf{Y}}' ~ \hat{\mathbf{Y}} - 2 \hat{\mathbf{Y}}' ~ \mathbf{Y} ~ \mathbf{R} ~ \mathbf{Y}' ~ \hat{\mathbf{Y}})}{\| \mathbf{Y} ~ \mathbf{R} ~ \mathbf{Y}' \|^2_F}  \\
& -  \frac{ 2 \mathcal{L}_b ~ diag(\mathbf{Y}' \mathbf{Y} ~ \mathbf{R} ~ \mathbf{Y}' ~ \mathbf{Y})}{\| \mathbf{Y} ~ \mathbf{R} ~ \mathbf{Y}' \|^2_F} \\
\text{where} ~ & ~  \mathbf{R} ~ = diag(\mathbf{r}), ~ \hat{\mathbf{Y}}  = \mathbf{F} \mathbf{D}^{-1} \mathbf{F}' \mathbf{Y}, ~ \mathbf{D} = \mathbf{F}^T \mathbf{F}
\end{aligned}
\end{equation}
\end{minipage}}

To derive $\frac{\partial \mathcal{L}_m}{\partial \mathbf{r}}$ we first compute $\frac{\partial \mathcal{L}_m}{\partial \mathbf{Q}}$ (equation \ref{eq:pLm_pQ}) and $\frac{\partial \mathbf{Q}_{i,j}}{\partial r_l}$ (equation \ref{eq:pQpr_element_form}).

\begin{equation}\label{eq:pLm_pQ}
\frac{\partial \mathcal{L}_m}{\partial \mathbf{Q}} = log \frac{\mathbf{Q}}{\mathbf{P}} + \mathbf{P}
\end{equation}

\begin{equation}\label{eq:pQpr_element_form}
\begin{aligned}
\frac{\partial \mathbf{Q}_{i,j}}{\partial r_l} & = \frac{1}{\underset{j'}{\Sigma} \hat{\mathbf{M}}_{i,j'}} [\bar{\mathbf{D}}_{i,l} \bar{\mathbf{D}}_{j,l}^T ~ - ~ \mathbf{Q}_{i,j} \bar{\mathbf{D}}_{i,l} \underset{j'}{\Sigma} \bar{\mathbf{D}}_{j',l}^T] \\
\text{where} ~ & \hat{\mathbf{M}} = \bar{\mathbf{D}} diag(\mathbf{r}) \bar{\mathbf{D}}^T, ~ \bar{\mathbf{D}} = \mathbf{D}^{-1} \mathbf{F}^T \mathbf{Y}
\end{aligned}
\end{equation}

Thus, we have the gradient of $\mathcal{L}_m$ over $r_l, \forall l \in [m]$ based on chain rule of partial derivations \footnote{In practice we add an ignorable positive scalar $\delta = 10^{-6}$ to $\mathbf{M}_{i,j}, \hat{\mathbf{M}}_{i,j}, \forall i,j \in [k]$ in computing $\mathbf{P}, \mathbf{Q}$ and $\frac{\partial \mathbf{Q}}{\partial r_l}, \forall l \in [m]$ to avoid numerical instability. The experimental results of this paper are indifferent to $\delta$ being $0$ or $10^{-6}$. }: 

\smallskip
\noindent\fbox{\begin{minipage}{26em}
\noindent \textbf{Gradient of $\mathcal{L}_m$ over $r_l$}

\begin{equation}\label{eq:gradient_L_m}
\begin{aligned}
\frac{\partial \mathcal{L}_m}{\partial r_l} & ~ = tr( ~ [log \frac{\mathbf{Q}}{\mathbf{P}} + \mathbf{P}]^T \frac{\partial \mathbf{Q}}{\partial r_l} ~ ) \\
\frac{\partial \mathbf{Q}}{\partial r_l} & = diag(\frac{1}{\hat{\mathbf{M}} \mathbf{1}})  [ \bar{\mathbf{D}}_{*,l} \bar{\mathbf{D}}_{*,l}^T - (\bar{\mathbf{D}}_{*,l}^T \mathbf{1}) diag(\bar{\mathbf{D}}_{*,l}) \mathbf{Q}  ~ ] \\
\end{aligned}
\end{equation}
where $\mathbf{Q}$ and $\mathbf{P}$ are computed according to equation \ref{eq:matching_image_matrices}.
\end{minipage}}

\subsection{Combining the Two Gradients to Update $\mathbf{r}$}
In this section, we compute the gradient w.r.t $\mathbf{r}$ to simultaneously optimize $\mathcal{L}_b$ and $\mathcal{L}_m$. It is conventional to combine the two gradients as $\frac{\partial \mathcal{L}_b}{\partial \mathbf{r}} +\beta \frac{\partial \mathcal{L}_m}{\partial \mathbf{r}}$ with a constant hyper-parameter $\beta$. However, we observe that objective $\mathcal{L}_b$ can dominate the minimization of $\mathcal{L}_b + \beta \mathcal{L}_m$. This can lead to \emph{increased} $\mathcal{L}_m$ unless $\beta$ is extremely large. According to our empirical study, the increment of $\mathcal{L}_m$ affects the quality of selected features, and it is difficult to search for a proper $\beta \in R_+$.  We alleviate this issue with a heuristic that combines the \emph{normalized} gradients proportionally according to a user-specified \textbf{composition ratio} $\bar{\beta} \in [0,1]$. With more confined hyper-parameter search space, this strategy is simple yet highly effective in practice to control the optimization of $\mathcal{L}_b$ and $\mathcal{L}_m$. 

Equation \ref{eq:joint_gradient} computes the combined gradient with sparsity penalty weight $\gamma$. We use Projected Gradient Descent (PGD) followed by rescaling/normalization to update $\mathbf{r}$, such that $\mathbf{r}$ satisfies both the non-negativity and $l$-2 norm constraints per iteration. The updating is formally defined as equation \ref{eq:update_r} performed in order, where $\eta^{(t)}$ is the step size at the $t$-th iteration.
\begin{equation}\label{eq:joint_gradient}
\frac{\partial \mathcal{L}}{\partial \mathbf{r}} =\frac{ (1 - \bar{\beta})}{\| \frac{\partial \mathcal{L}_b}{\partial \mathbf{r}} \|_2} \frac{\partial \mathcal{L}_b}{\partial \mathbf{r}} +\frac{ \bar{\beta}}{\| \frac{\partial \mathcal{L}_m}{\partial \mathbf{r}} \|_2} \frac{\partial \mathcal{L}_m}{\partial \mathbf{r}} + \gamma \bar{\mathbf{1}}
\end{equation}

\begin{equation}\label{eq:update_r}
\begin{aligned}
 & \mathbf{r} = \mathbf{r}^{(t)} - \eta^{(t)} \frac{\partial \mathcal{L}}{\partial \mathbf{r}} ; \\
~ &
r_l \leftarrow Max(r_l,0), ~ \forall l \in [m] ~ ; \\
~
& \mathbf{r}^{(t+1)} \leftarrow \frac{\mathbf{r}}{\| \mathbf{r} \|_2} \\
\end{aligned}
\end{equation}

We summarize our optimization framework for equation \ref{eq:formulation_norm_r} in algorithm \ref{alg:bmgufs}. Given $d$, a specific number of selected features, we select the top $d$ features with the largest importance scores in $\mathbf{r}\in[0,1]^m$.
We empirically demonstrate the convergence of $\mathcal{L}_b + \mathcal{L}_m$ with properly set $\bar{\beta}$ in section \ref{sec:solver_inspection}. Interestingly, we observe that our method can select high-quality features when $\bar{\beta}$ effectively reduces $\mathcal{L}_m$ (Figure \ref{fig:lm_vs_iterations}). We leave theoretical investigation on using equations \ref{eq:joint_gradient} and \ref{eq:update_r} as general purpose optimization technique to future endeavors.

\begin{algorithm}
\caption{Block Model Guided Unsupervised Feature Selection}\label{alg:bmgufs}
\begin{algorithmic}[1] 
\REQUIRE Block allocation $\mathbf{F}$, image matrix $\mathbf{M}$ (precomputed from structural graph $\mathbf{A}$), feature matrix $\mathbf{Y}$, ratio for combining gradients $\bar{\beta}$, sparsity regularization weight $\gamma$, maximum iterations $maxI$, (constant) step size $\eta$.
\STATE Initialize feature selection vector $\mathbf{r} = \frac{\mathbf{1}}{\| \mathbf{1} \|_2}$.
\WHILE{Termination Condition Unsatisfied}
\STATE Compute $\frac{\partial \mathcal{L}_b}{\partial \mathbf{r}}$ with equation \ref{eq:gradient_L_b} and $\frac{\partial \mathcal{L}_m}{\partial \mathbf{r}}$ with equation \ref{eq:gradient_L_m}.
\STATE Compute gradient $\frac{\partial \mathcal{L}}{\partial \mathbf{r}}$ with equation \ref{eq:joint_gradient} given $\bar{\beta}$ and $\gamma$.
\STATE Use $\frac{\partial \mathcal{L}}{\partial \mathbf{r}}$ and $\eta$ to update $\mathbf{r}$ based on PGD and rescaling to satisfy constraints with equation \ref{eq:update_r}.
\ENDWHILE
\RETURN: Feature selection vector $\mathbf{r}$.
\end{algorithmic}
\end{algorithm}

\noindent \textbf{Computational Complexity Analysis.} The computational cost of our algorithm for computing the gradients in each iteration is given by $\mathcal{O}(k^3 m + m^3)$ where $m$ is the original number of features, $k$ is the number of blocks in the block model (a very small integer). The number of nodes is irrelevant to the computational cost in each iteration if we precompute constant matrices to avoid redundant computations.

\section{Experiments}\label{sec:experiments}\footnote{Source codes available in \url{https://github.com/ZilongBai/KDD2020BMGUFS} for reproducibility.}
In this section, we extensively evaluate our method on various real-world public datasets to address the following questions: 

\begin{itemize}
\item{\textbf{Q1. Effectiveness of our method (section \ref{sec:effectiveness}).}} Can our method find high-quality features to facilitate downstream application (see Figure \ref{fig:clustering_performance})? 

\item{\textbf{Q2. Sensitivity to Block Model Guidance (section \ref{sec:block_model_effect}).} } The question is multi-facet and we focus on the following two in this paper due to space limitations:
\begin{compactenum}
\item Is our method sensitive to perturbations in block model guidance as input to our algorithm \ref{alg:bmgufs} (see Figure \ref{fig:bm_perturbation_sensitivity})?
\item Can \textbf{different} block models generated from the \textbf{same} structural graph offer different guidance (see Figure \ref{fig:block_model_sensitivity})?
\end{compactenum}

\item{\textbf{Q3. Sensitivity to Parameter Selection (section \ref{sec:hyperparameter_effect}).}} How do the composition ratio $\bar{\beta}$ and sparsity penalty $\gamma$ influence the clustering performance of the features selected by our method (see Figure \ref{fig:parameter_sensitivity})?

\item{\textbf{Q4. Solver Inspection (section \ref{sec:solver_inspection}).}} How does the composition ratio $\bar{\beta}$ influence the optimization process of the objective function of our model (see Figure \ref{fig:combined_objective_vs_iterations})?
\end{itemize}

\subsection{Experimental Settings}
\noindent \textbf{Datasets.} We test our method on three real-world public datasets: BlogCatalog \cite{huang2018exploring}, Citeseer (sparse graph) \cite{sen2008collective, kipf2016semi} and  Cora \cite{sen2008collective, kipf2016semi}.  
Table \ref{tab:stats_of_datasets} summarizes basic statistics of the three datasets. See Appendix \ref{sec:dataset_preprocessing} for details on dataset preprocessing.

\noindent \textbf{Baselines.} We compare with the following baselines to demonstrate the effectiveness of our method (\textbf{Q1}). We use the source codes provided by the paper authors to reproduce MMPOP and NetFS. We apply methods in \texttt{scikit-feature}\cite{li2017feature} to obtain the results of LapScore, SPEC, and NDFS. See Appendix \ref{sec:baseline_codes_hp} for links to their source codes and settings of their hyper-parameters. 
\begin{itemize}
\item{\textbf{All features}.}
\item{\textbf{LapScore}} \cite{he2006laplacian} evaluates the importance of a feature based on its power of preserving locality.
\item{\textbf{SPEC}} \cite{zhao2007spectral} proposes a unified framework for feature selection based on spectral graph theory.
\item{\textbf{NDFS}} \cite{li2012unsupervised} jointly learns cluster labels via spectral clustering \emph{and} feature selection matrix with $l_{2,1}$-norm regularization.
\item{\textbf{MMPOP}}\cite{wei2015efficient} selects features to maximally preserve local partial order on the structural graph. 
\item{\textbf{NetFS}} \cite{li2016robust} \footnote{According to the empirical evaluation in \cite{li2019adaptive}: (1) NetFS \cite{li2016robust} can achieve state-of-the-art ACC and NMI on BlogCatalog at $d = 200$ (even better than itself at $d \in \{600, 1000\}$). (2) ADAPT \cite{li2019adaptive} and NetFS can achieve similar performance - better than their baseline methods - on various datasets w.r.t. varying number of selected features.} embeds latent representation learning that respects network clustering into feature selection.  
\end{itemize}

\noindent \textbf{Metrics for Performance Evaluation.} We follow the convention \cite{li2017feature, li2019adaptive}  to use K-means clustering on selected features (after normalization) 
as downstream application to evaluate the quality of selected features. We follow the typical settings in \cite{li2017feature, yang2011l2} to use Accuracy (ACC in equation \ref{eq:acc}) and Normalized Mutual Information (NMI in equation \ref{eq:nmi}) as performance metrics. See Appendix \ref{appendix:metrics} for their detailed definitions.  Conventionally, the higher ACC and NMI, the higher quality of the features. We report the mean result after $20$ runs of K-means to compensate for randomness.

\begin{table}
\begin{tabular}{ | c | c | c | c |}
\hline 
\textbf{Statistic} & \textbf{BlogCatalog} &  \textbf{Citeseer} & \textbf{Cora}  \\
\hline 
\textbf{Nodes \#} & 5196   & 3312 & 2708 \\
 \hline
\textbf{Links \#} & 171743  & 4660 & 5278 \\
 \hline
\textbf{Features \#} & 8189  & 3703  & 1433 \\
 \hline
\textbf{Classes \#} & 6 & 6 & 7 \\
 \hline
\end{tabular} \caption{Summary on Statistics of Datasets}\label{tab:stats_of_datasets}  
\end{table}

\subsection{Building Block Models for Structural Graph}
As we discuss in the related work (section \ref{sec:related_work}), there exist a plethora of approaches for block modeling. We use the multiplicative update rules for the Orthogonal Nonnegative Matrix tri-Factorization (ONMtF) formulation (equation \ref{eq:ONMtF_block_model}) proposed by seminal work \cite{ding2006orthogonal} to generate multiple candidate block models. Since $\mathbf{F}, \mathbf{M}$ are not jointly convex in the formulation, we can harvest multiple (i.e., $10$) different block models based on random initializations for each dataset. Each block model is computed with $100$ iterations\footnote{Multiplicative update rules are recognized to converge slowly in solving NMF formulations \cite{lin2007projected}. We set the maximum iterations to $100$ where the objective function does not observably decrease.}. We convert $\mathbf{F} \in [0,1]^{n \times k}$ to $\mathbf{F} \in \{0,1\}^{n \times k}$ by setting the largest entry on each row to $1$, others to $0$. The number of blocks $k$ is set to the number of classes for each dataset. We then compute $\mathbf{M}$ based on equation \ref{eq:M_close_form_least_squares} in theorem \ref{theorem:closed_form_M}. The $10$ different block models are identified by $\# i, i \in [10]$ according to the order they were generated.
\begin{equation}\label{eq:ONMtF_block_model}
\underset{\mathbf{F} \geq \mathbf{0}, \mathbf{M} \geq \mathbf{0}}{Minimize} \| \mathbf{A} - \mathbf{F} \mathbf{M} \mathbf{F}^T \|_F ~ s.t., ~ \mathbf{F}^T \mathbf{F} = \mathbf{I}
\end{equation}

We define relative reconstruction error (RRE) of using block model $\mathbf{F}, \mathbf{M}$ for reconstructing adjacency matrix $\mathbf{A}$ in equation \ref{eq:rre} to facilitate block model selection \emph{before} running our algorithm \ref{alg:bmgufs}. 
\begin{equation}\label{eq:rre}
RRE(\mathbf{F}, \mathbf{M}, \mathbf{A}) = \frac{\| \mathbf{A} - \mathbf{F} \mathbf{M} \mathbf{F}^T \|_F}{ \| \mathbf{A} \|_F }
\end{equation}

\subsection{Effectiveness of Our Method}\label{sec:effectiveness}
We demonstrate the effectiveness of our method by comparing against baseline methods in K-means clustering performance on the selected features. We vary the number of selected features $d \in \{16, 64, 128, 200, 600\}$. The comparison results are in Figure \ref{fig:clustering_performance}.
We follow the principles from its original paper to set hyper-parameters for each baseline method. We leave the discussion on model selection to latter sections and report the results of our method with the following parameter setting:
\begin{itemize}
\item We fix the composition ratio $\bar{\beta}=0.6$ based on observations in sections \ref{sec:hyperparameter_effect} and \ref{sec:solver_inspection}.
\item We set $\gamma$ via grid search in $\{0, 0.5, 1, \dots, 5\}$ while $nnz(\mathbf{r}) \geq d$. This is because overly strong $\gamma$ can force our method to generate too many absolutely zero entries in $\mathbf{r}$ to pick top $d$ features based on non-zero entries in $\mathbf{r}$.
\item We report the results of two block models amongst the $10$ candidates for each dataset. One is chosen for having the lowest RRE (i.e., $BMGUFS^*$ in Figure \ref{fig:clustering_performance}), thus selected before running our algorithm \ref{alg:bmgufs}. The other is selected via grid-search (i.e., $BMGUFS^s$ in Figure \ref{fig:clustering_performance}).
\end{itemize}

Figure \ref{fig:clustering_performance} demonstrates the superiority of our method over baselines in experiments. Specifically, we observe:
\begin{itemize}
\item Our BMGUFS selects features that achieve better clustering performance in basically all the investigated cases than the baselines. We constantly outperform our major competitor method NetFS in various settings.

We conjecture the superiority of our method on all the investigated datasets with the following explanations:
\begin{compactenum}
\item Block model of the structural graph provides more robust guidance against noisy links on real-world networks than detailed links and disconnections. Therefore our method outperforms the methods that use micro-level guidance (e.g., MMPOP \cite{wei2015efficient}).
\item Structural equivalence appreciated by block models is more informative than intra-community proximity between nodes to guide feature selection on the  investigated datasets. Therefore our method outperforms the methods guided by macro-level graph structure based on community analysis (e.g., NetFS \cite{li2016robust}).
\end{compactenum}
\item Our method achieves predominant ACC and NMI on each dataset at extremely small number of features. Specifically, our method outperforms the clustering results using all features by over $11 \%$ in ACC with only $d = 16$ features on BlogCatalog, whereas other baseline methods fail to surpass the performance of \textbf{all features} with such a small amount of features.
This highlights the power of our method in both finding high-quality features and dimension reduction.
\item The clustering performance of baseline methods that only consider the feature matrix, i.e., LapScore, SPEC, and NDFS, are consistently suboptimal to ours; however, they can outperform other baselines that incorporate graph structure in some cases. This supports the underlying premise of our work that \emph{block model can be a better way than other approaches to extract guidance from the structural graph for unsupervised feature selection}. %
\end{itemize}
\begin{figure*}
\begin{subfigure}{.33\linewidth}
\centering
\includegraphics[width=2.1in]{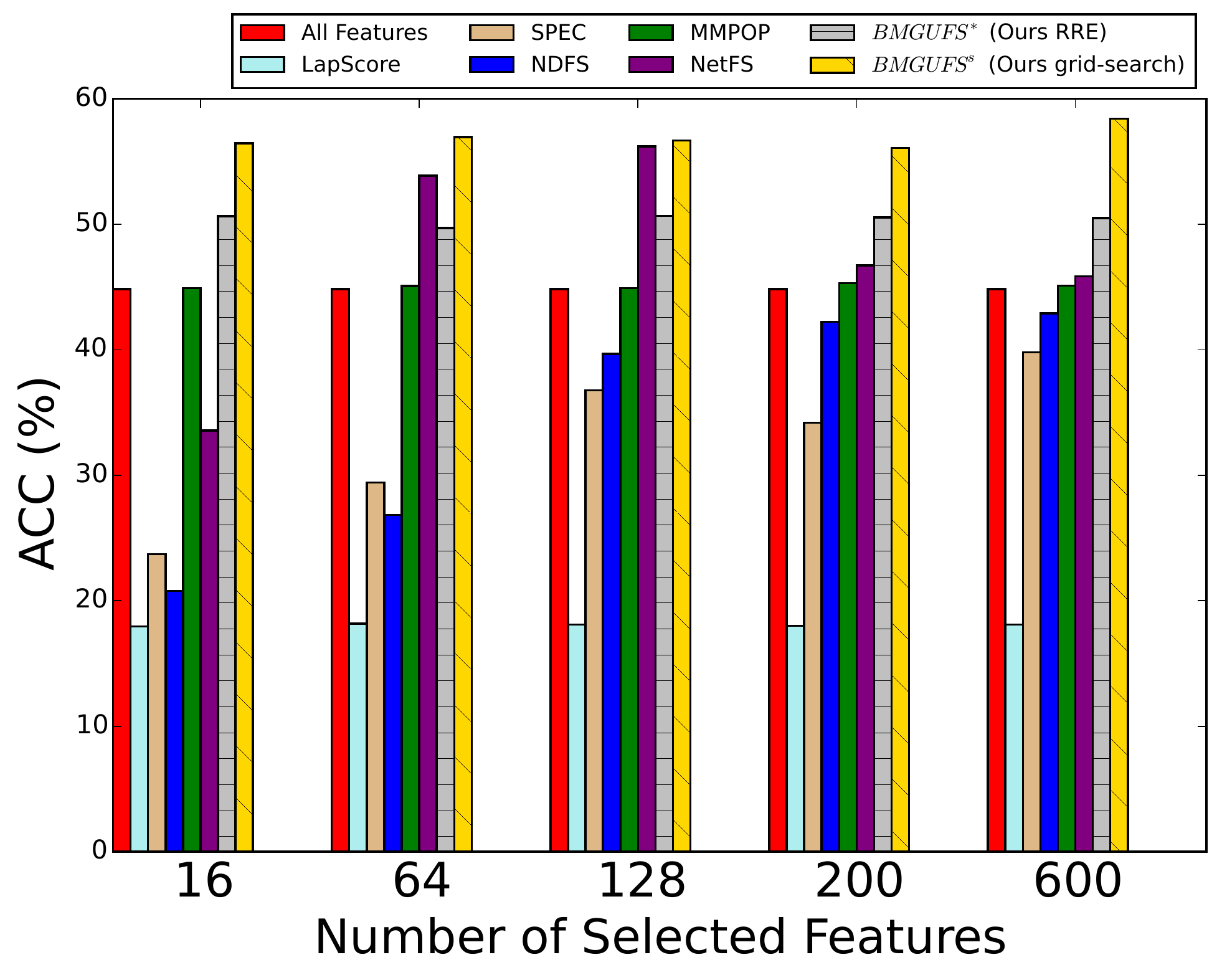}
\caption{BlogCatalog ACC}
\label{fig:blogcatalog_acc}
\end{subfigure}%
\begin{subfigure}{.33\linewidth}
\centering
\includegraphics[width=2.1in]{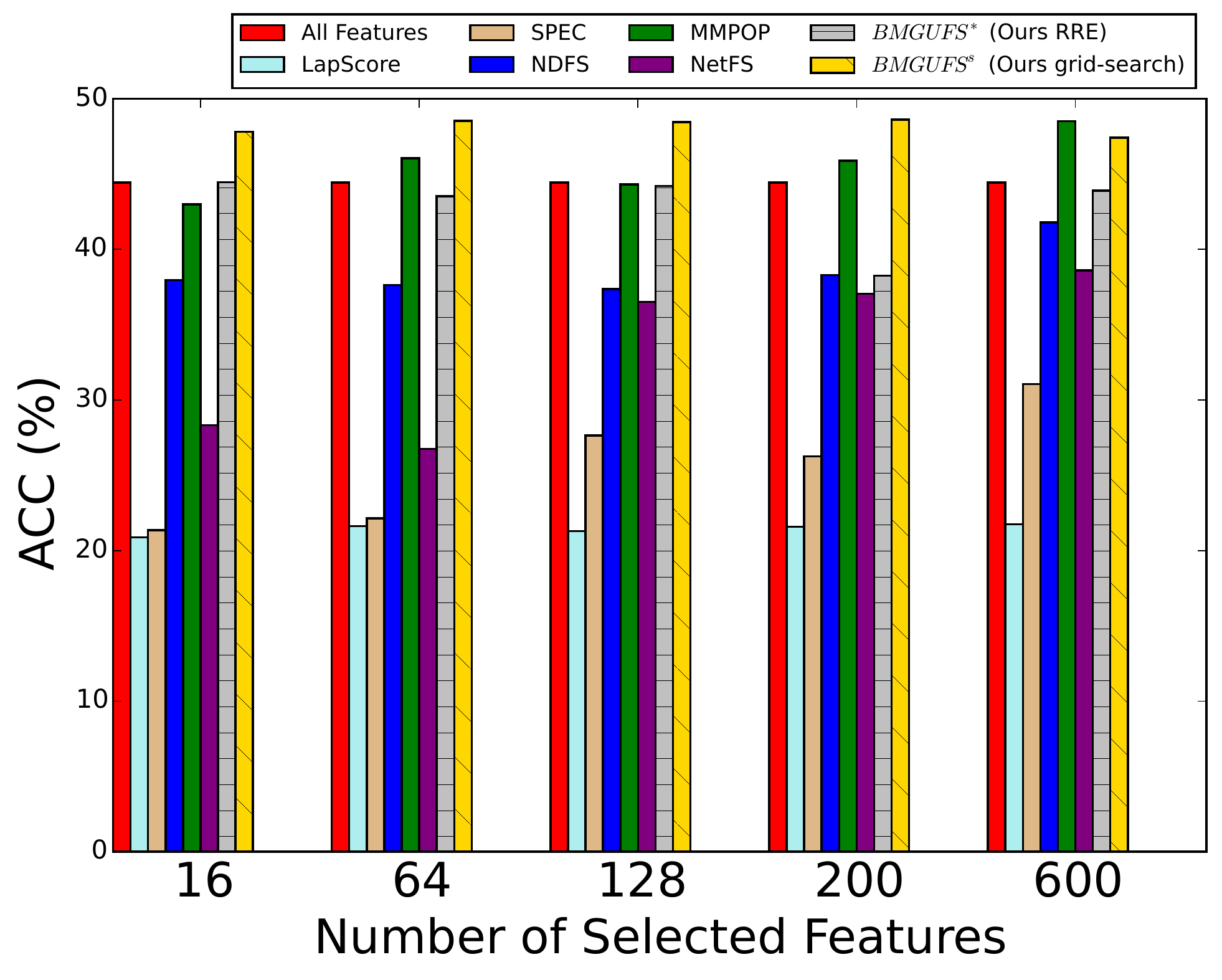}
\caption{Citeseer ACC}
\label{fig:citeseer_acc}
\end{subfigure}%
\begin{subfigure}{.33\linewidth}
\centering
\includegraphics[width=2.1in]{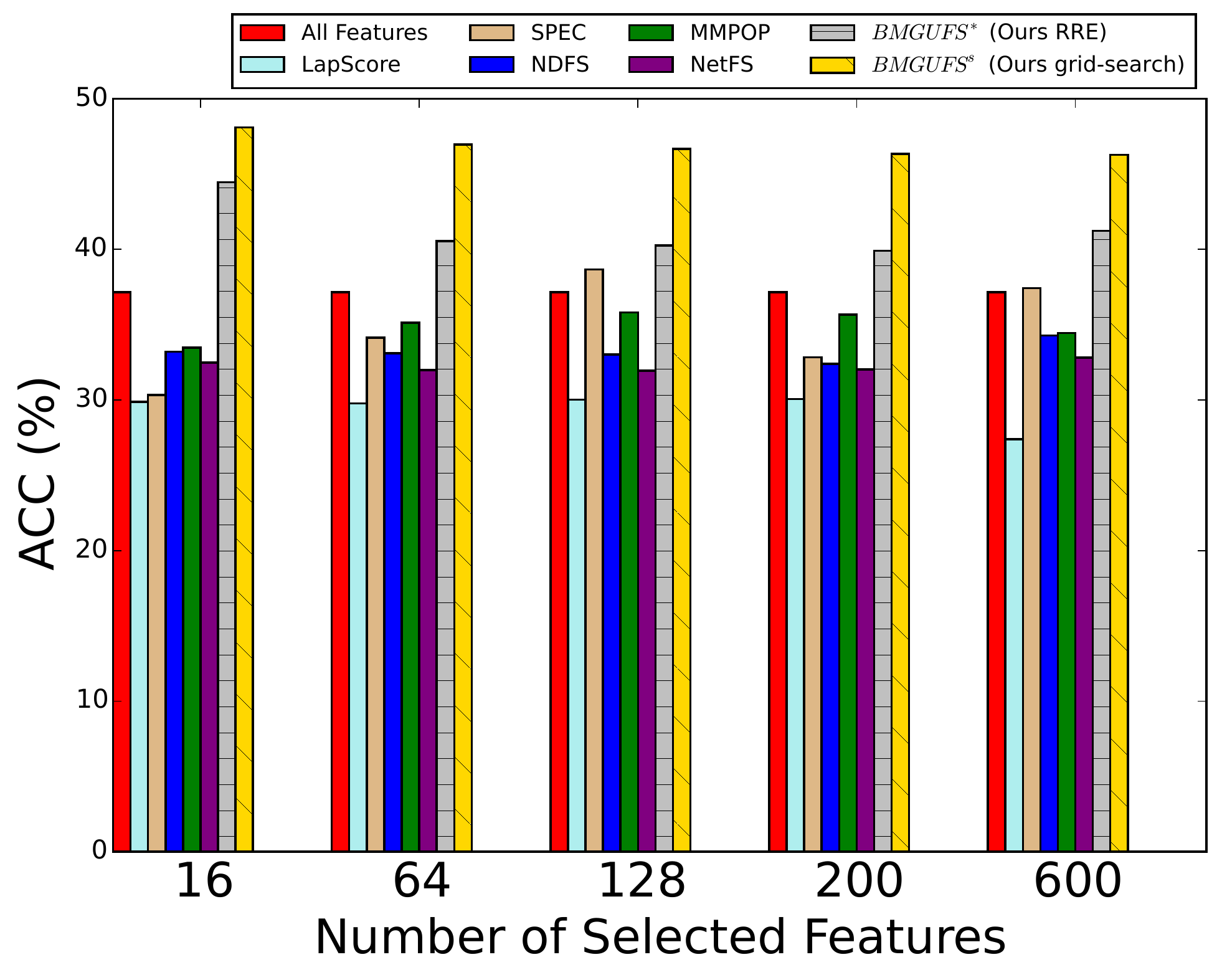}
\caption{Cora ACC}
\label{fig:cora_acc}
\end{subfigure}%
\\[1ex]
\begin{subfigure}{.33\linewidth}
\centering
\includegraphics[width=2.1in]{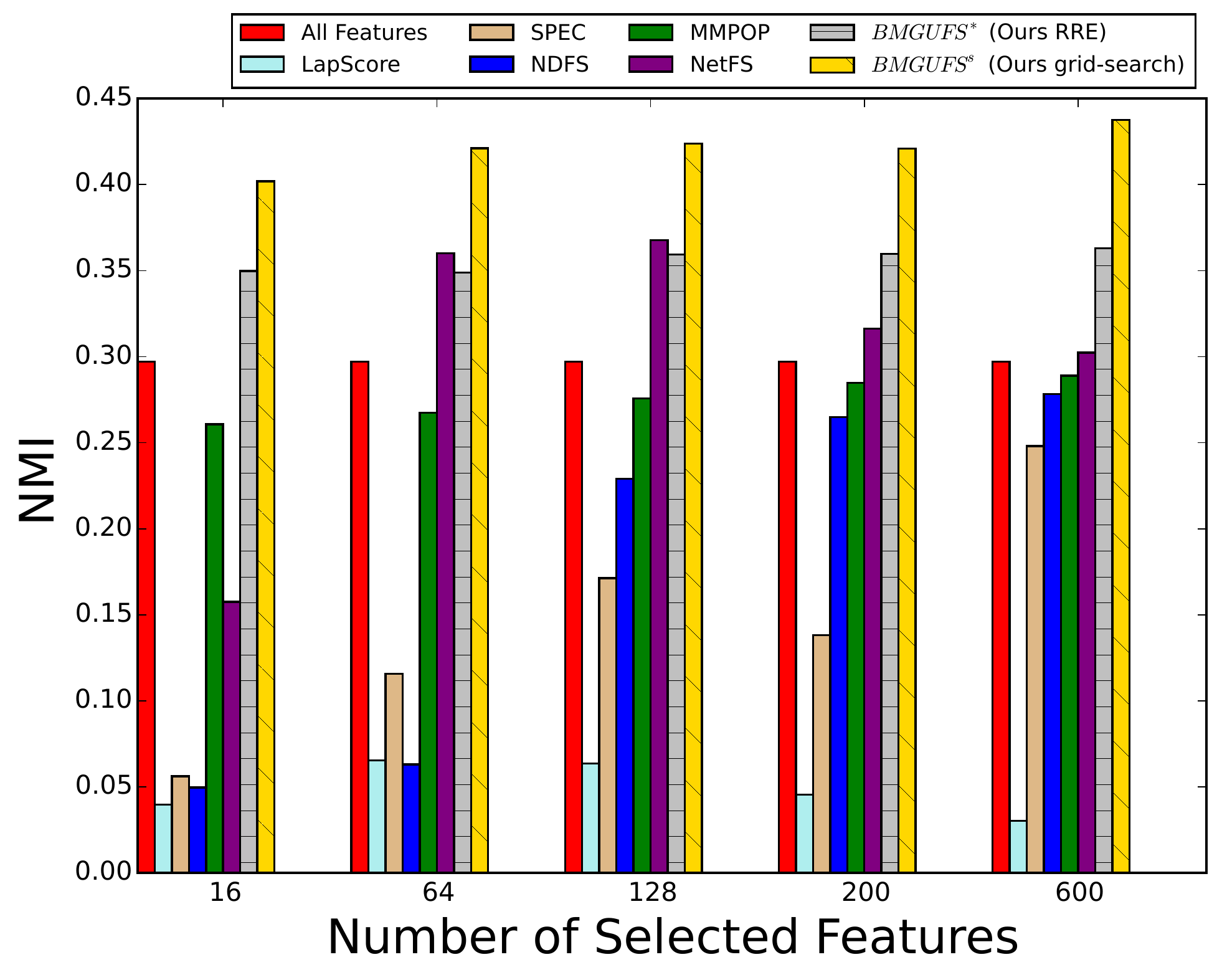}
\caption{BlogCatalog NMI}
\label{fig:blogcatalog_nmi}
\end{subfigure}%
\begin{subfigure}{.33\linewidth}
\centering
\includegraphics[width=2.1in]{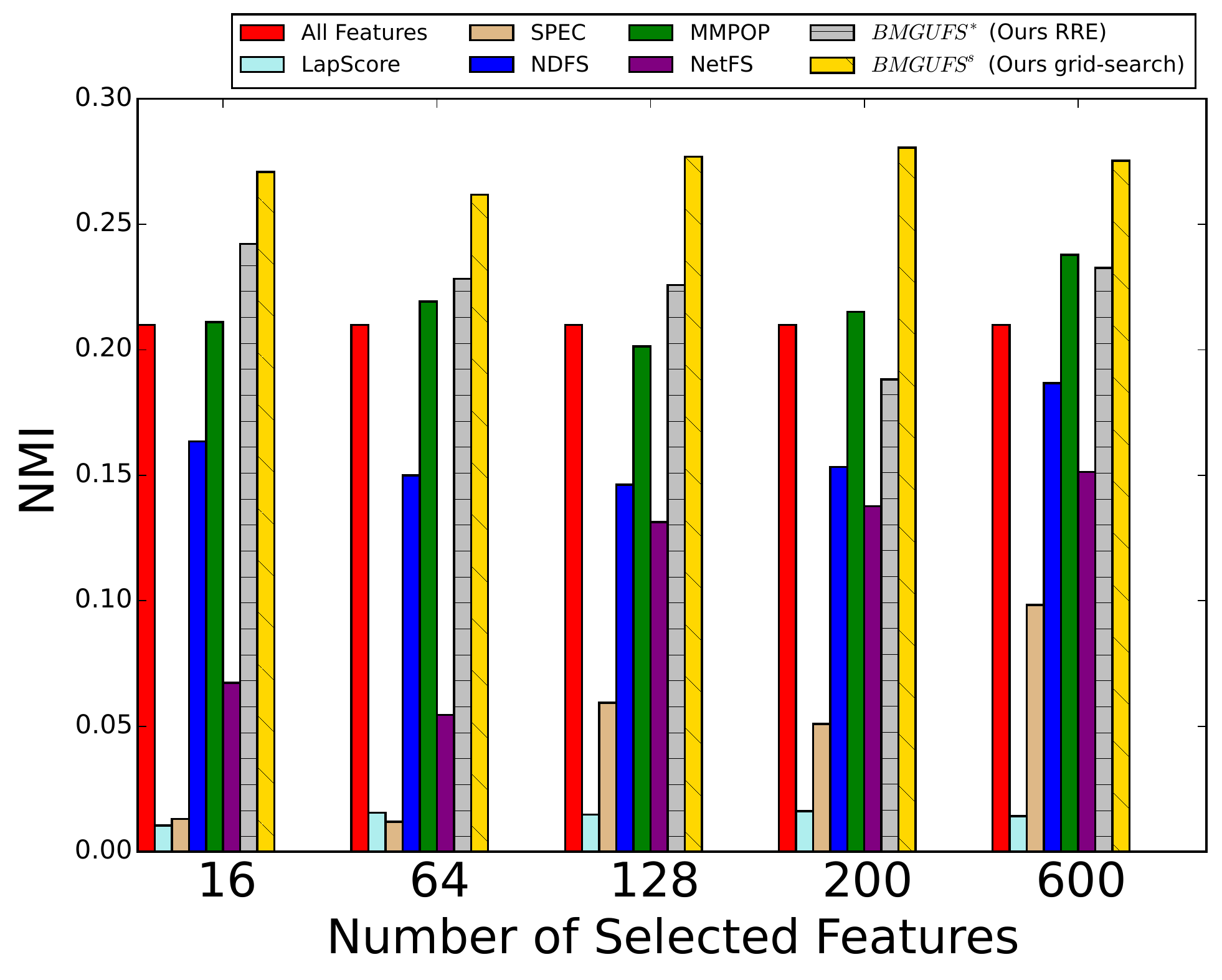}
\caption{Citeseer NMI}
\label{fig:citeseer_nmi}
\end{subfigure}%
\begin{subfigure}{.33\linewidth}
\centering
\includegraphics[width=2.1in]{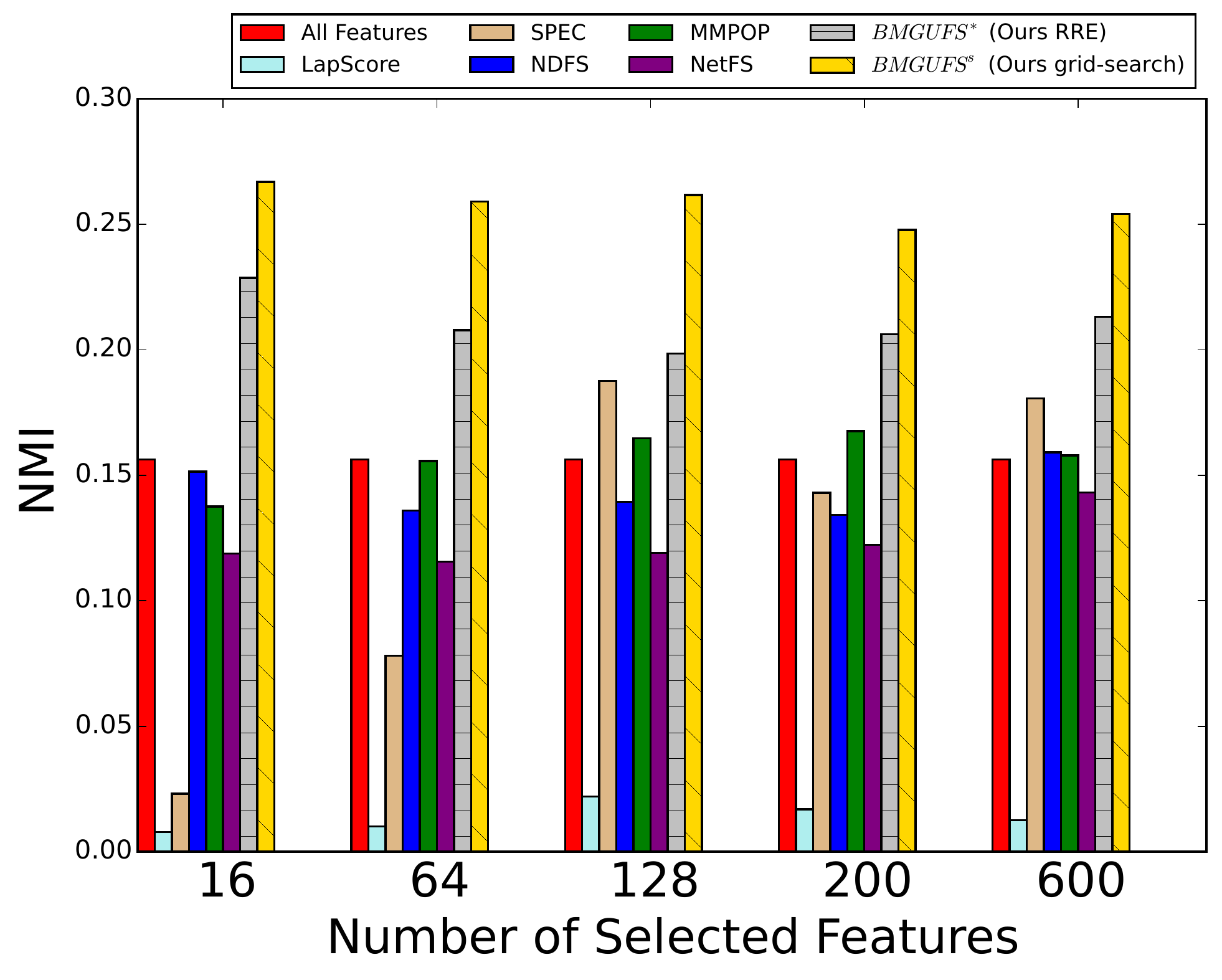}
\caption{Cora NMI}
\label{fig:cora_nmi}
\end{subfigure}%
\caption{Comparing K-means clustering performance of features selected by different unsupervised feature selection methods on different datasets. Top row: ACC ($\%$). Bottom row: NMI (in $[0,1]$). Datasets from left to right: BlogCatalog, Citeseer, Cora. Our method $BMGUFS^s$ with grid-search (the rightmost bars) achieves superior performance in basically all the cases. Our method $BMGUFS^*$ that uses the block model with the lowest RRE (the second rightmost bars) can attain highly-competitive results. Our method demonstrates advantage with extremely small amount of features (i.e., $d=16$).}\label{fig:clustering_performance}
\end{figure*}

\subsection{Sensitivity to Block Model Guidance}\label{sec:block_model_effect}
In this section we explore the sensitivity of our method w.r.t. the block model guidance from two perspectives: \textbf{(P1)} sensitivity to perturbations in the input block model and \textbf{(P2)} sensitivity to different block models generated from the same structural graph. We present the results on BlogCatalog as similar patterns exist on other datasets. We fix $\gamma=2$, $\bar{\beta} = 0.6$ for this section. Figure \ref{fig:bm_perturbation_sensitivity} shows our algorithm \ref{alg:bmgufs} is relatively robust against small perturbations in the input block model. Figure \ref{fig:block_model_sensitivity} shows the feature selection varies with different block models generated from the same graph, yet RRE is a reasonable criteria to select block model from multiple candidates before running our algorithm (see Figure \ref{fig:clustering_performance}).

\noindent \textbf{(P1)} We select the block model of lowest RRE as the base block model $\mathbf{F}$, $\mathbf{M}$. We introduce different levels of artificial perturbations (i.e., $5\%$ and $10\%$) by randomly selecting the given percentage of nodes and modifying their block memberships (i.e., random re-allocation).
We explore two situations: (a) only perturb $\mathbf{F}$ but keep the original $\mathbf{M}$ and (b) perturb $\mathbf{F}$ and recompute $\mathbf{M}$ for the structural graph $\mathbf{A}$. We measure the difference between the feature selection vectors $\mathbf{r}$ generated by the perturbed block models and $\mathbf{r}_0$ generated by the base block model with cosine distance, and summarize the results as box plots in Figure \ref{fig:bm_perturbation_sensitivity}.
We observe that our method is robust to small perturbations in block allocation. 

\noindent \textbf{(P2)} Different block models can be generated from the \emph{same} graph (e.g., from different local optima of equation \ref{eq:ONMtF_block_model}).
In Figure \ref{fig:block_model_sensitivity} (dark blue bars), we compare the K-means clustering performance on features selected by our method guided by the $10$ candidate block models generated from the structural graph of BlogCatalog. We observe notable variance in clustering performance on selected features of different block models. Nonetheless, the block model with the lowest RRE (e.g., $ID=3$ in Figure \ref{fig:block_model_sensitivity}) can guide our feature selection to highly-competitive (or even better) clustering performance in comparison to baselines (see $BMGUFS^*$ in Figure \ref{fig:clustering_performance}). This demonstrates the benefit of generating multiple block models as candidates to guide feature selection. We suggest to choose the block model with the lowest RRE as the guidance of our BMGUFS to alleviate grid-search in scenarios sensitive to computational cost.

We also explore whether (i) the quality of selected features is correlated with (ii) the accuracy of using block model to predict the ground-truth labels. We use the block allocation $\mathbf{F}$ as node clustering result to predict the ground-truth labels, and present the results in the yellow bars in Figure \ref{fig:block_model_sensitivity}. We observe no direct correlation between (i) and (ii).

\begin{figure}
\begin{subfigure}{.5\linewidth}
\centering
\includegraphics[width=1.8in]{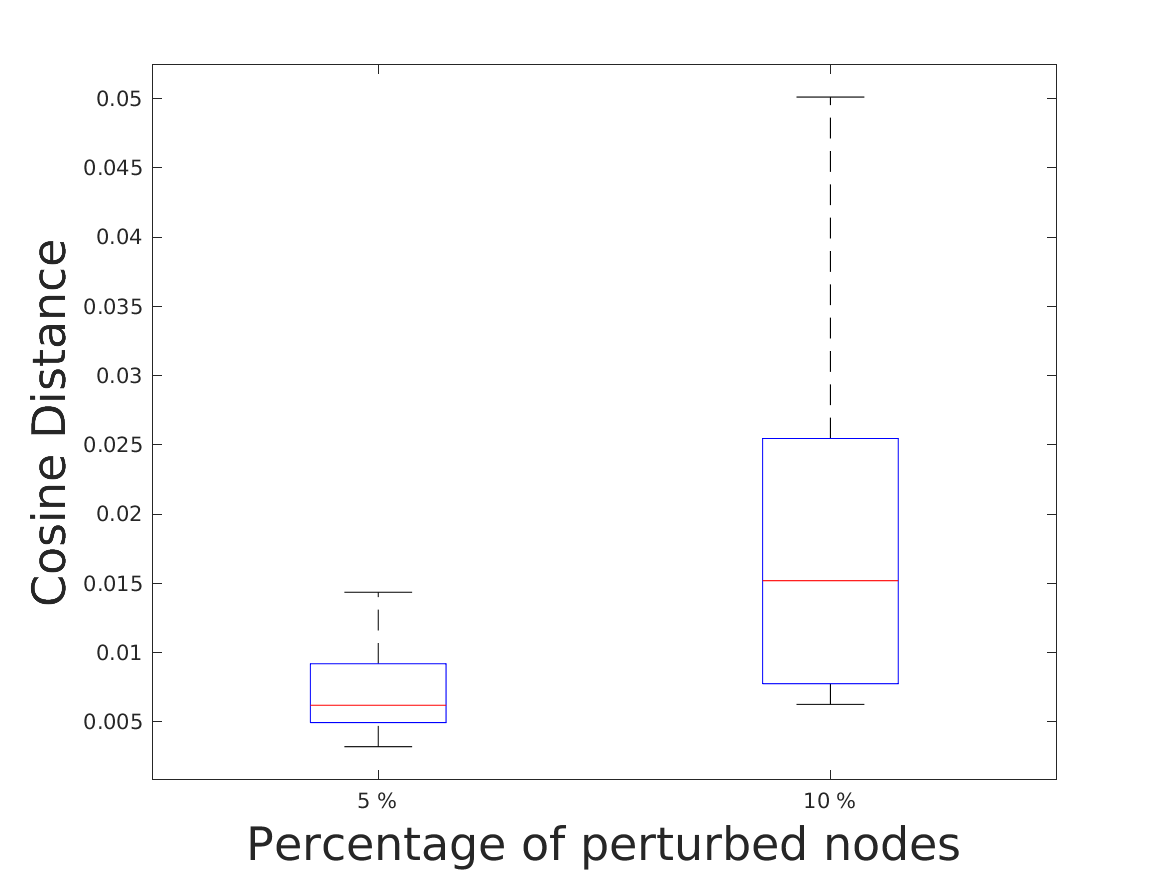}
\caption{Perturb $\mathbf{F}$, keep $\mathbf{M}$.}
\label{fig:overlaps_vs_block_models}
\end{subfigure}%
\begin{subfigure}{.5\linewidth}
\centering
\includegraphics[width=1.8in]{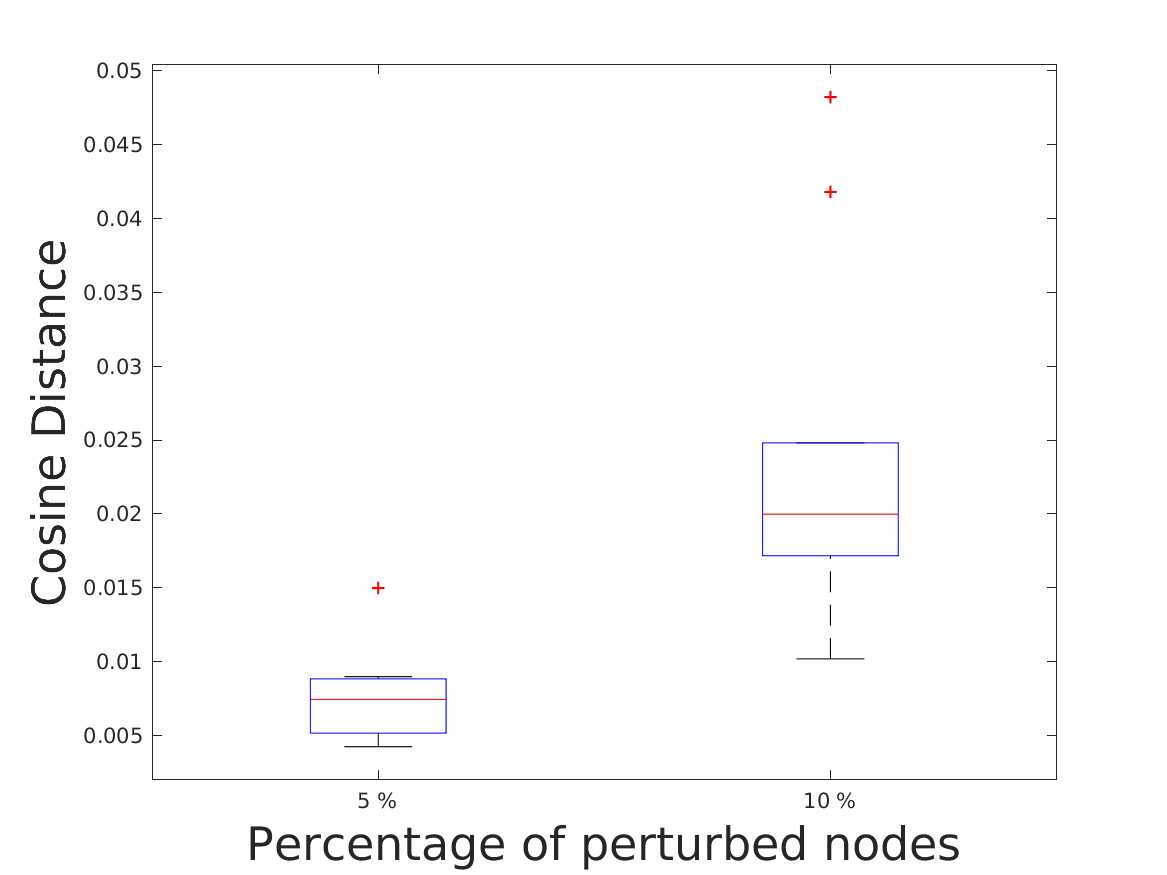}
\caption{Perturb $\mathbf{F}$, adjust $\mathbf{M}$.}
\label{fig:block_models_vs_acc}
\end{subfigure}%
\caption{Sensitivity analysis of BMGUFS w.r.t. different levels of perturbations in the input block model. Cosine distance measures the difference between $\mathbf{r}$ learnt from the perturbed block models and $\mathbf{r}_0$ of the block model without perturbation. \textbf{Left:} only perturb block allocation $\mathbf{F}$ with random re-allocation, keep the original image matrix $\mathbf{M}$. \textbf{Right:} perturb block allocation $\mathbf{F}$ and adjust $\mathbf{M}$ based on the structural graph. Our BMGUFS is robust to small perturbations in $\mathbf{F}$.} \label{fig:bm_perturbation_sensitivity}
\end{figure}

\begin{figure}
\begin{subfigure}{.5\linewidth}
\centering
\includegraphics[width=1.8in]{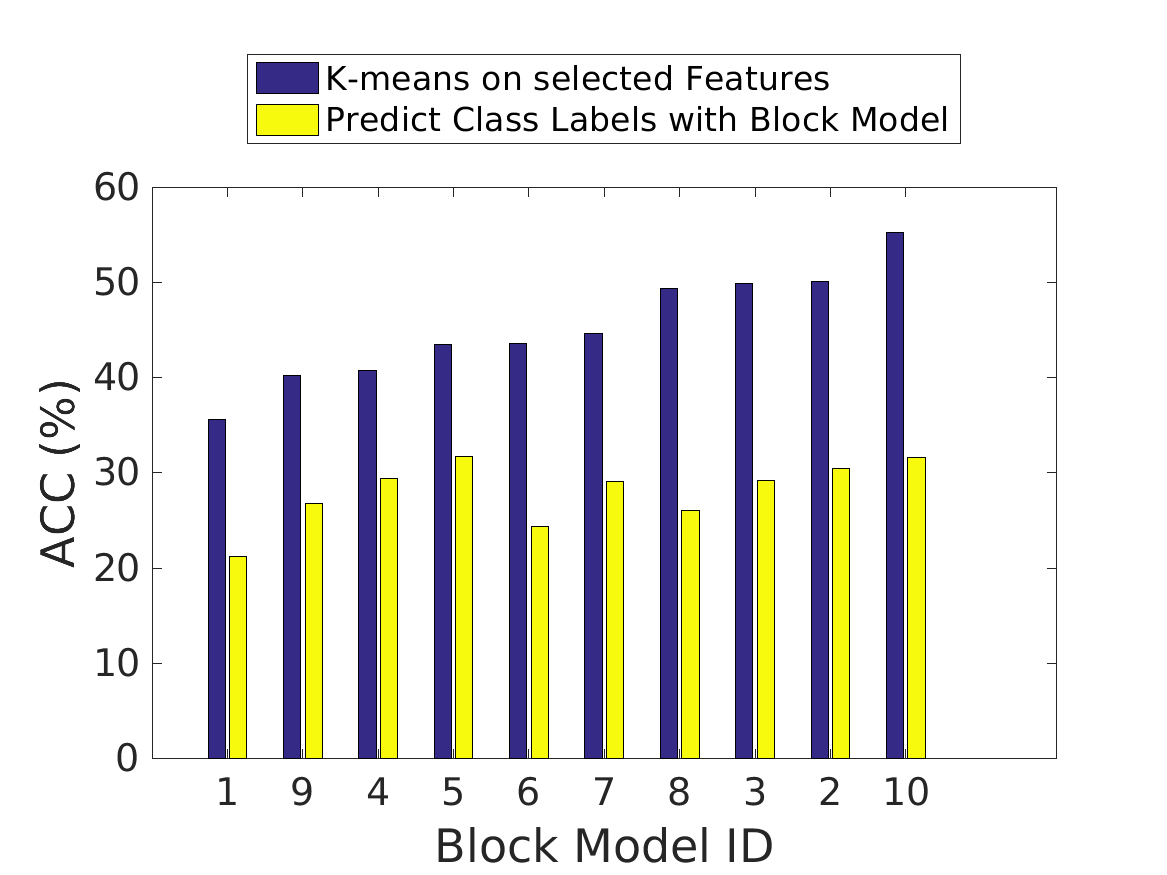}
\caption{ACC vs Block Models.}
\label{fig:block_models_vs_acc}
\end{subfigure}%
\begin{subfigure}{.5\linewidth}
\centering
\includegraphics[width=1.8in]{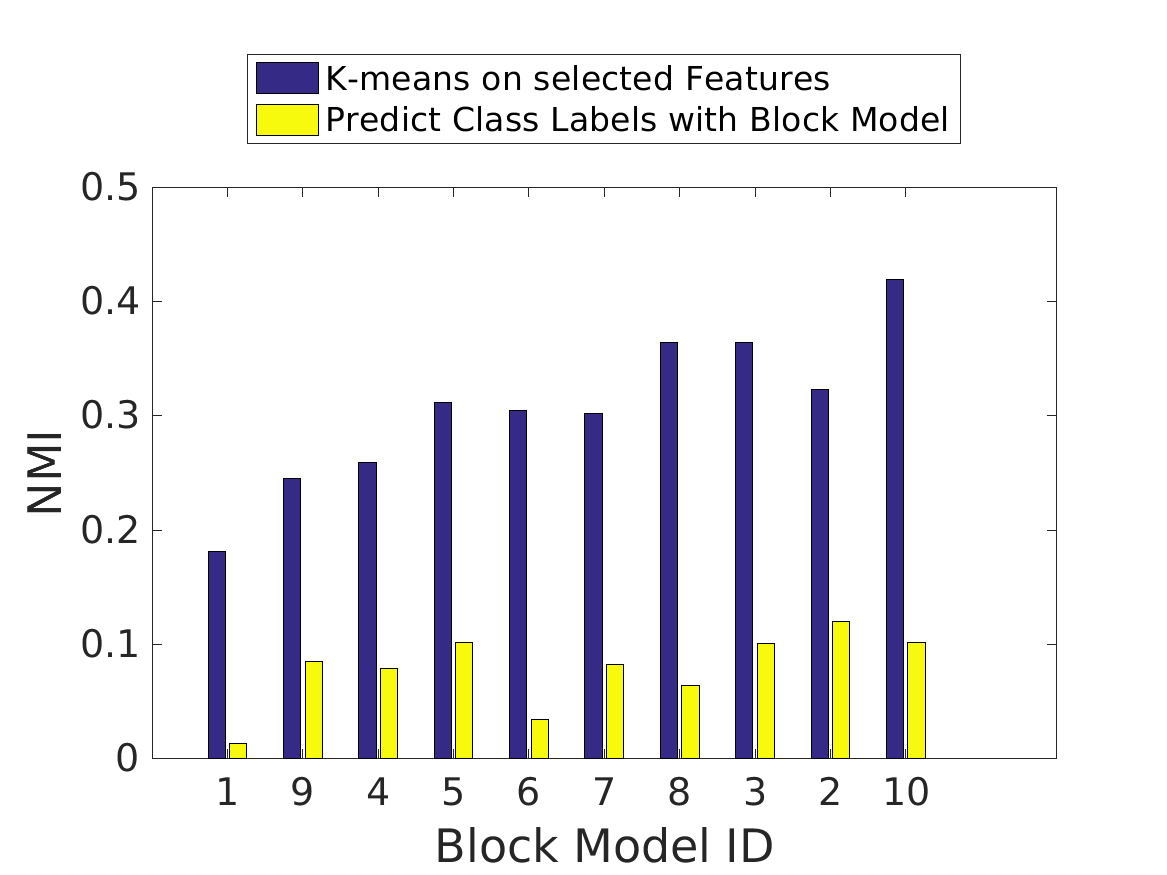}
\caption{NMI vs Block Models.}
\label{fig:overlaps_vs_block_models}
\end{subfigure}%
\caption{Comparing clustering performance on selected features guided by different block models (in blue bars). Clustering performance varies using \emph{different} block models generated from the \emph{same} structural graph. The prediction accuracy using block allocation (in yellow bars) does \emph{not} directly correlate to feature selection quality.} \label{fig:block_model_sensitivity}
\end{figure}
\subsection{Sensitivity to Parameter Selection}\label{sec:hyperparameter_effect}
We investigate the sensitivity of our BMGUFS to the two key hyper-parameters, composition ratio $\bar{\beta}$ and sparsity penalty $\gamma$. We vary $\bar{\beta} \in \{0, 0.1, 0.2, \dots, 0.9, 1 \}$ and $\gamma \in \{ 0, 0.5, 1, \dots, 4, 4.5, 5 \}$ in our experiments. We make two observations: (1) $\bar{\beta} = 0.6$ is preferred by various datasets and (2) it is practical to perform grid-search to find optimal $\gamma$.

In Figure \ref{fig:parameter_sensitivity}, we show the results of the top $d=16$ features selected by our method on BlogCatalog (first row) and Cora (second row) as representative examples. Citeseer shows similar pattern as BlogCatalog. We set $d = 16$ features to investigate a wider range of $\gamma$ as we witness stronger $\gamma$ can yield $\mathbf{r}$ with less non-zero entries than the required number of selected features. Nevertheless, our experiments cover cases where $\gamma$ is too large to ensure $nnz(\mathbf{r}) \geq d$. 
We report the clustering performance as $0$'s if $nnz(\mathbf{r}) < d$. \footnote{We avoid using zero entries in $\mathbf{r}$ in ranking the features as they are less informative than the non-zero ones.} 

Figure \ref{fig:parameter_sensitivity} shows a clear transition pattern around $\bar{\beta} = 0.5$ for both datasets. The clustering performance sustains at a relatively high level for all investigated $\bar{\beta} \geq 0.6$ when $nnz(\mathbf{r}) \geq d$. 
Varying $\gamma$ does not induce significant change in the clustering performance at $\bar{\beta} \geq 0.6$,  unless it is too large to sustain $nnz(\mathbf{r}) \geq d$. We acknowledge that different datasets favor different strength of sparsity (e.g, BlogCatalog favors strong sparsity yet Cora prefers none sparsity regulation). Note that the search space of $\gamma$ for our method is confined and bounded from both ends, because $\gamma \geq 0$ and $\gamma$ cannot be too large in order to sustain $nnz(\mathbf{r}) \geq d$. Therefore, it is practical to use grid search in practice to pursue better performance. 
\begin{figure}[t]
\begin{subfigure}{.5\linewidth}
\centering
\includegraphics[width=1.8in]{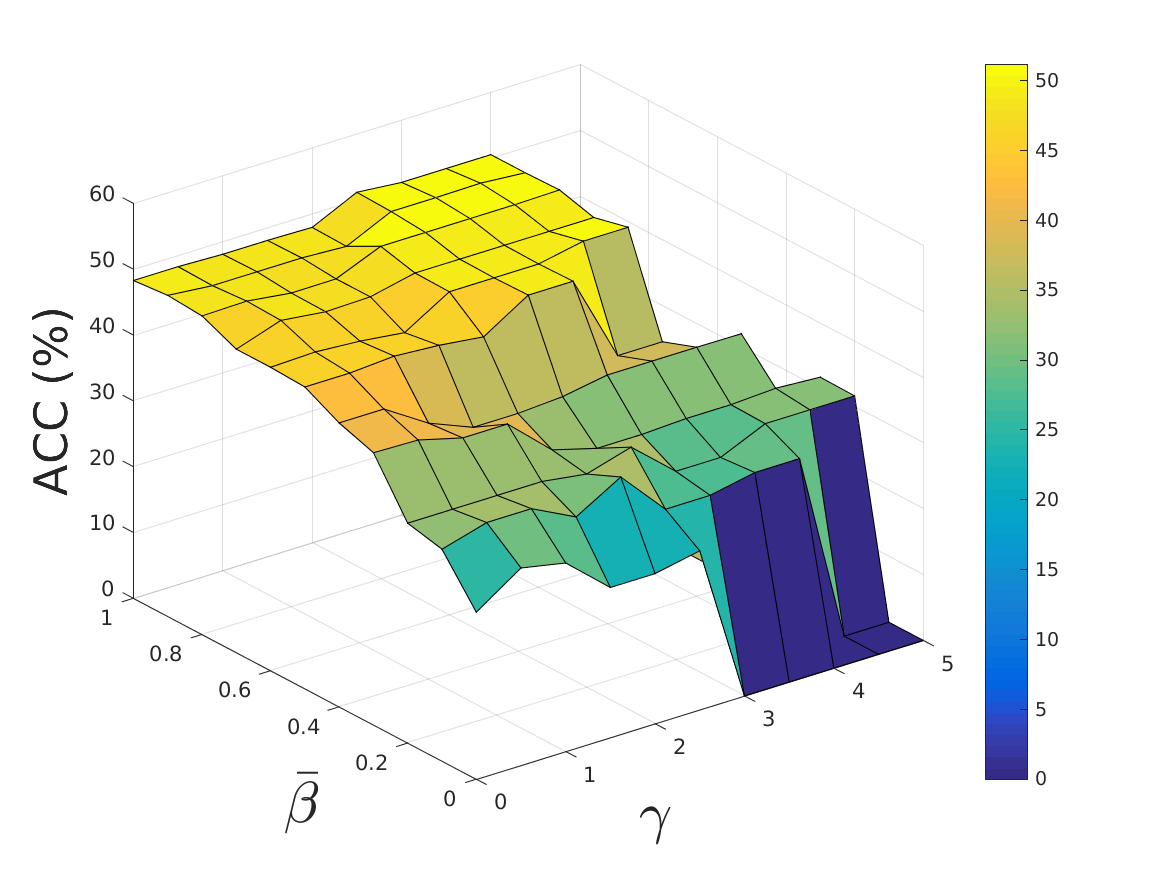}
\caption{BlogCatalog ACC}
\label{fig:blogcatalog_acc}
\end{subfigure}%
\begin{subfigure}{.5\linewidth}
\centering
\includegraphics[width=1.8in]{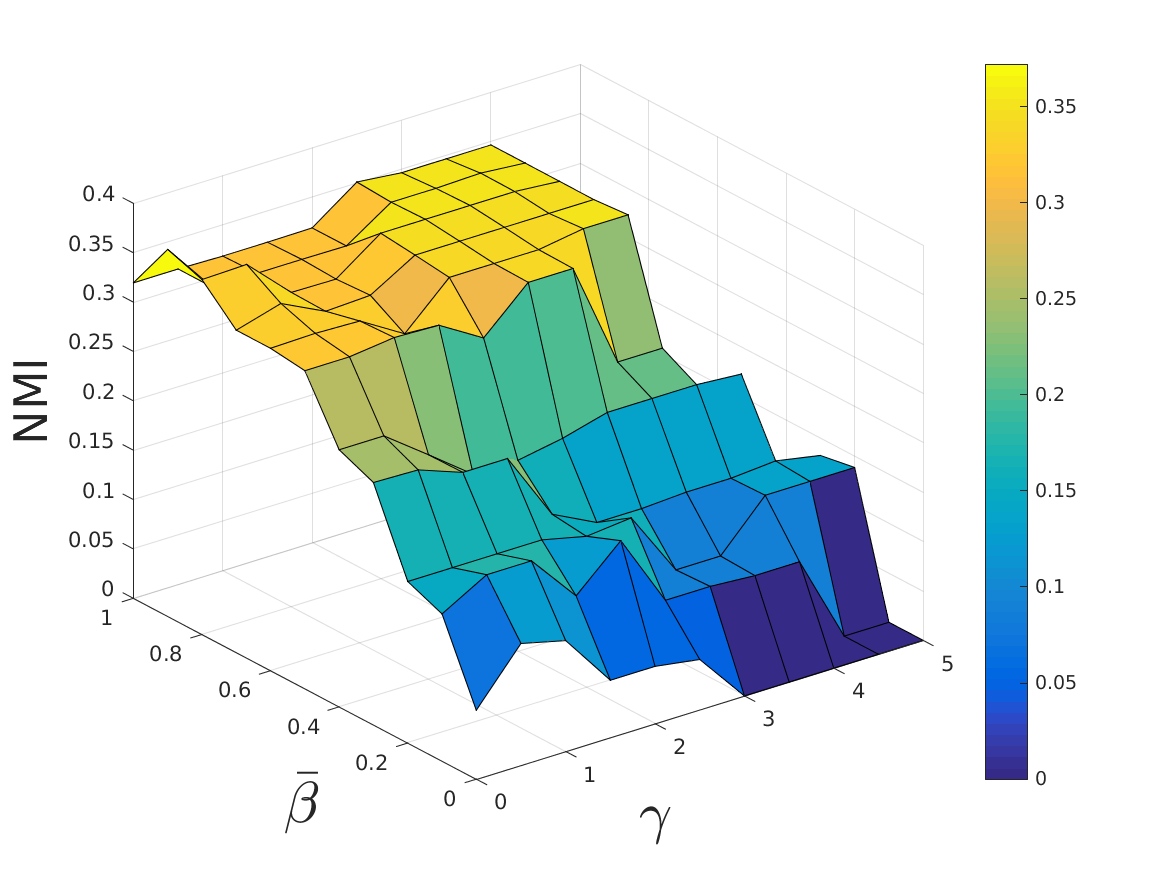}
\caption{BlogCatalog NMI}
\label{fig:flickr_acc}
\end{subfigure}%
\\
\begin{subfigure}{.5\linewidth}
\centering
\includegraphics[width=1.8in]{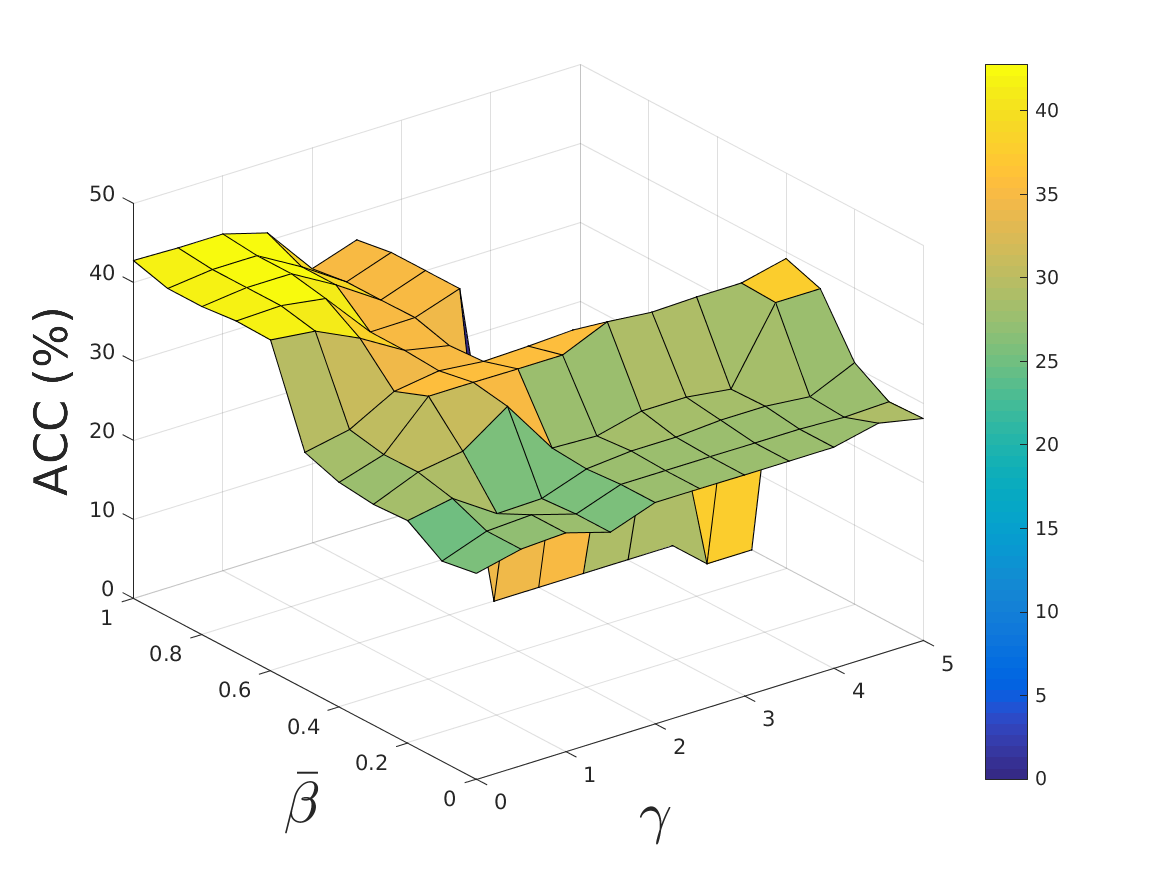}
\caption{Cora ACC}
\label{fig:blogcatalog_acc}
\end{subfigure}%
\begin{subfigure}{.5\linewidth}
\centering
\includegraphics[width=1.8in]{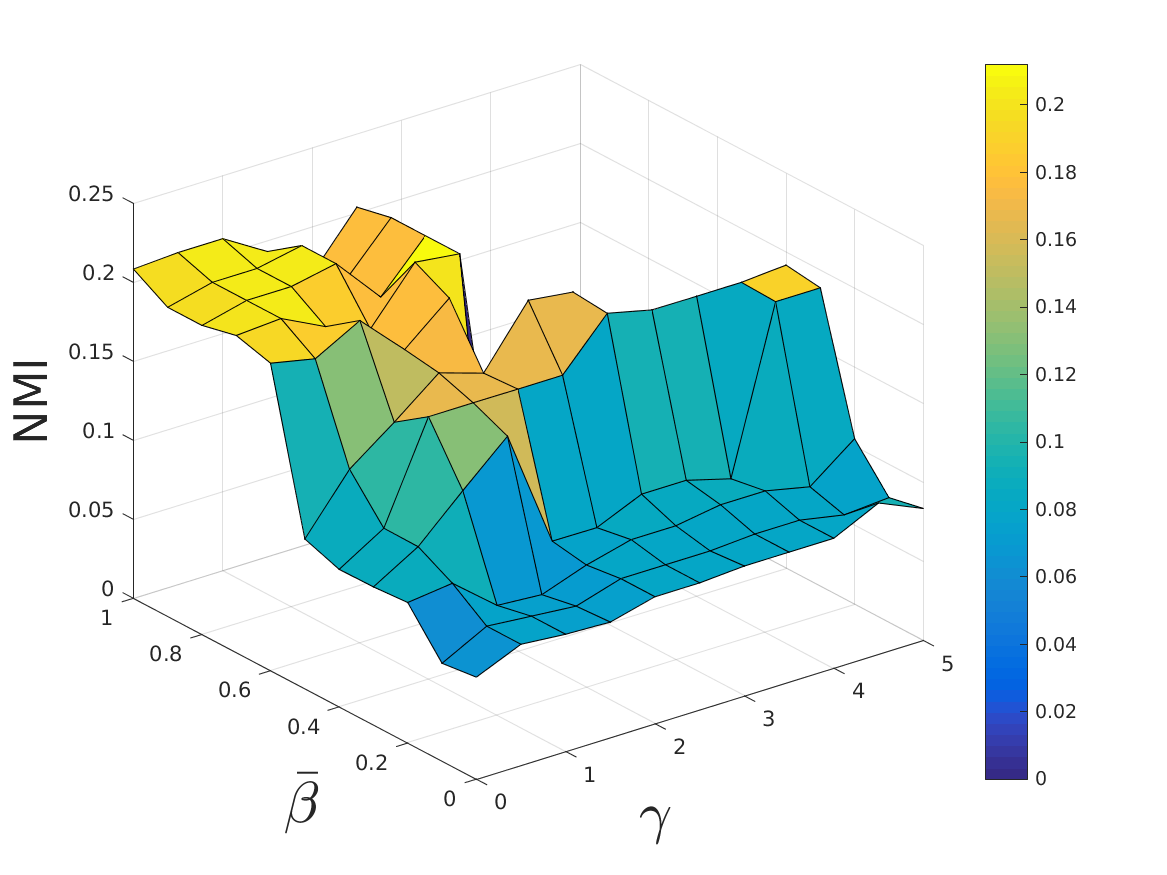}
\caption{Cora NMI}
\label{fig:flickr_acc}
\end{subfigure}%
\caption{Parameter sensitivity analysis w.r.t. $\bar{\beta}$ and $\gamma$ for K-means clustering on selected features. While $nnz(\mathbf{r}) \geq d$, the clustering performance is generally stable w.r.t. $\gamma$; $\bar{\beta}$ shows clear cliff effect around $\bar{\beta} = 0.5$ but remains stable at $\bar{\beta} \geq 0.6$. }\label{fig:parameter_sensitivity}
\end{figure}

\subsection{Solver Inspection}\label{sec:solver_inspection}
In this section we provide an empirical study on our heuristic solver. We inspect the values per iteration of our two objectives: $\mathcal{L}_b$, relative reconstruction error of block model on induced graph and, $\mathcal{L}_m$, the KL-divergence based distance measure between $\hat{\mathbf{M}}$ and the given $\mathbf{M}$.   As a representative example, we apply our BMGUFS at $\gamma=2$ on BlogCatalog guided by the block model with lowest RRE. We vary $\bar{\beta} \in \{0, 0.1, \dots, 1\}$ to demonstrate how it balances the decrement of the two objectives. Figure \ref{fig:combined_objective_vs_iterations} shows that at $\bar{\beta} \geq 0.6$ (solid curves), $\mathcal{L}_m$ monotonically decreases and $\mathcal{L}_b+\mathcal{L}_m$ converges within $200$ iterations. Interestingly, these $\bar{\beta}$ correlate with highly-competitive or state-of-the-art clustering performance in our experiments (see previous sections). This suggests the relative importance of optimizing $\mathcal{L}_m$ for our method to acquire high-quality features for clustering. 

\begin{figure}[t]
\begin{subfigure}{.5\linewidth}
\centering
\includegraphics[width=1.8in]{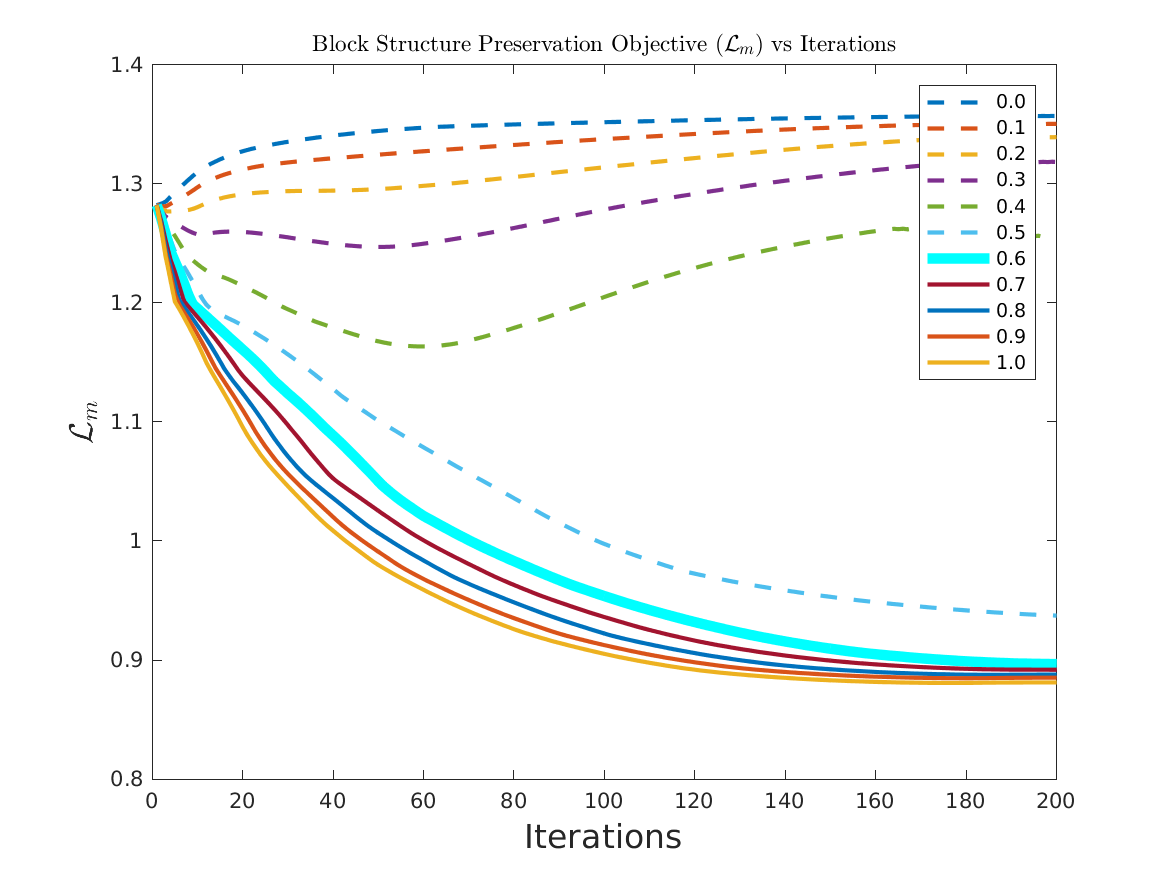}
\caption{$\mathcal{L}_m$ vs iterations.}
\label{fig:lm_vs_iterations}
\end{subfigure}%
\begin{subfigure}{.5\linewidth}
\centering
\includegraphics[width=1.8in]{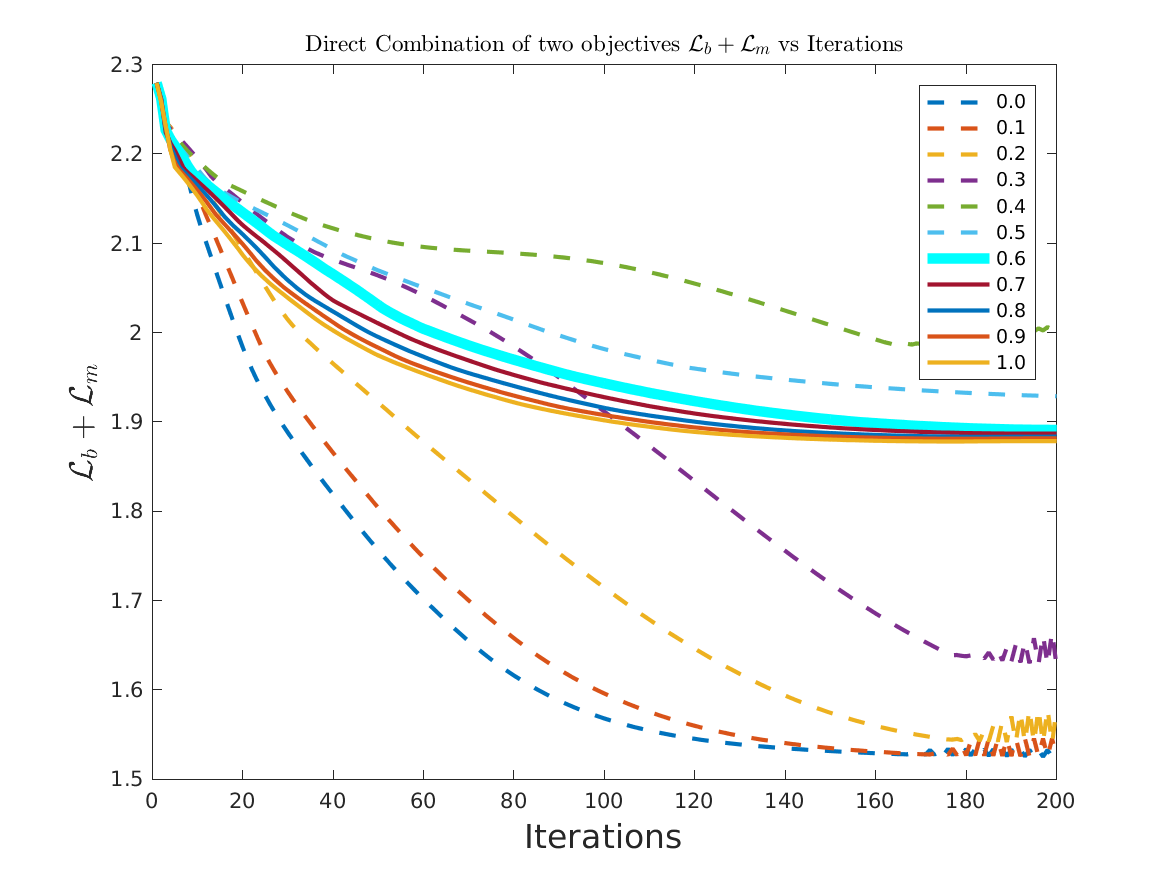}
\caption{$\mathcal{L}_b + \mathcal{L}_m$ vs iterations.}
\label{fig:lb_and_lm_vs_iterations}
\end{subfigure}%
\caption{Inspection on the objective function per iteration with varying $\bar{\beta}$. \textbf{Left:} $\mathcal{L}_m$ vs iterations. \textbf{Right:} $\mathcal{L}_b+\mathcal{L}_m$ vs iterations. When $\bar{\beta} \geq 0.6$, $\mathcal{L}_m$ and $\mathcal{L}_b + \mathcal{L}_m$ both decrease smoothly and monotonically, and converge within $200$ iterations. Compared to Figure \ref{fig:parameter_sensitivity}, this suggests the importance of ensuring the minimization of $\mathcal{L}_m$ for our method to acquire high-quality features for clustering.}\label{fig:combined_objective_vs_iterations}
\end{figure}

\section{Related Work}\label{sec:related_work}
In this section we briefly review related work on unsupervised feature selection 
and discuss applications of 
block model.

\noindent \emph{\textbf{Unsupervised Feature Selection.}}
There have been a lot of efforts for improving unsupervised feature selection performance. Many different methods have been proposed for solving different problems, e.g. adding $l_{2,1}$-norm minimization constraint to reduce the redundant or even noisy features \cite{yang2011l2, li2012unsupervised}, algorithms for improving the robustness of graph embedding and sparse spectral regression \cite{shi2014robust}, various methods for adaptive structure learning in different scenarios \cite{du2015unsupervised, li2019adaptive}, etc.  Concerning different selection strategies, feature selection methods can be broadly categorized \cite{li2017feature} as wrapper (e.g., \cite{kohavi1997wrappers, dy2004feature}), filter (e.g., \cite{he2006laplacian, zhao2007spectral, li2012unsupervised}), and embedded methods (e.g., \cite{hou2011feature, qian2013robust, li2016robust,  belkin2002laplacian}).

The concept of graph has long been explored in unsupervised feature selection to extract proximal supervisory signals \cite{li2017feature}, including seminal work based on spectral graph analysis (e.g., LapScore \cite{he2006laplacian}, SPEC \cite{zhao2007spectral}, and NDFS \cite{yang2011l2}). However, this line of work has mostly been focused on utilizing the topological patterns in the \emph{original feature space} as regularization or constraints for feature selection (e.g., \cite{gu2012locality, du2013local, shi2014robust}). 

It is relatively recent to incorporate \emph{another} network over instances to guide feature selection with the emergence of attributed networks in various domains (e.g., \cite{taxidou2014online, safari2014protein}). This provides another source of guidance based on the intuition that the links between instances indicates their similarity in the selected feature space. We contextualize our work in this area. 
The work most closely related to ours can be generally categorized based on the scope of patterns explored on the network topology into (1) micro-level \cite{wei2015efficient, wei2016unsupervised, li2019adaptive} and (2) macro-level \cite{tang2012unsupervised, li2016robust}. 
A common limitation of directly incorporating links and disconnections as \emph{micro-level} guidance is to be susceptible to noisy and incomplete links, which commonly exist in real-world large and complex networks \cite{liu2016good}. Our method explores block model as a \emph{macro-level} structural guidance to alleviate such issues.

As for \emph{macro-level (community analysis)}, one of the earliest efforts on network-guided unsupervised feature selection LUFS \cite{tang2012unsupervised} extracts social dimensions as non-overlapping clusters of nodes to regularize feature selection. NetFS \cite{li2016robust} embeds the latent representation learning into the feature selection process, which has been reported to be highly effective on various public datasets. However, the unified optimization framework of NetFS \cite{li2016robust} can be influenced by the low-quality features, hence the guidance can deviate from the desirable macro-level network structure. Moreover, their SymNMF-based latent representation learning formulation is consistent with community analysis. 
Our work is fundamentally different from existing related work both model-wise and methodology-wise.

\noindent \emph{\textbf{Block Model}} is a popular method to group nodes based on structural equivalence/similarity \cite{muller2012neural}, which has been relaxed to stochastic equivalence \cite{abbe2017community}. 
It leads to the well-studied area of Stochastic Block Model (SBM), which has long been correlated to community detection \cite{abbe2017community}. 
There are studies on the equivalence between NMF-based formulations and SBM at their respective optima \cite{paul2016orthogonal, zhang2018equivalence}. 
To model disassortative block-level interactions and other challenging structural patterns beyond conventional community structure \cite{aicher2015learning}, \cite{ganji2018image} formulates block model discovery with explicit structural constraints on the image matrix. 
Recent advances in block model include applications to a wide range of challenging domains, e.g., brain imaging data \cite{bai2017unsupervised, bai2018mixtures}  and Twitter \cite{bai2018discovering}. Methodologically, \cite{mattenet2019generic} proposes a framework to efficiently solve block allocation in binary with constraints on the image matrix based on Constrained Programming (CP). 

In this paper we use the multiplicative update rules developed by seminal work \cite{ding2006orthogonal} to build block models from the structural graph.
To the best of our knowledge, we are the first to explore modeling the feature selection problem as graph learning regularized by a block model precomputed from the structural graph. 
There have been many kinds of block models \cite{abbe2017community} with different characteristics that can potentially match unsupervised feature selection for different linked data.
We envision it to facilitate many powerful unsupervised feature selection methods for various complicated real-world attributed networks. 

\section{Conclusion}\label{sec:conclusion}
We propose a novel graph-driven unsupervised feature selection method guided by the block model.
The block modeling process in our method is not influenced by the original feature set. 
Moreover, our similarity graph over nodes in the selected feature space explicitly exploits the relations between nodes. 
Methodologically, we not only utilize the grouping of nodes as blocks but also require the induced graph to respect the precomputed image matrix. Hence, our method utilizes more complex macro-level network structure beyond conventional community structures.
Experiments on various real-world datasets demonstrate the effectiveness of our method as it outperforms baseline methods in finding high-quality features for K-means clustering. 
For in-depth characterization of our method, we explored the sensitivity of our method regarding hyper-parameters and different block models generated from the graph. This can facilitate the application of our method to domains beyond our exploration. 
Our method demonstrates the power of using block model to guide unsupervised feature selection. 
Methodologically, we leave it as future endeavor to jointly learn block model for the graph and select high-quality features.


\begin{acks}
This research is supported by the National Science Foundation via grant IIS-1910306 and ONR via grant N000141812485.
\end{acks}

\bibliographystyle{ACM-Reference-Format}
\bibliography{bmgufs-bibliography}


\begin{thebibliography}{46}


\ifx \showCODEN    \undefined \def \showCODEN     #1{\unskip}     \fi
\ifx \showDOI      \undefined \def \showDOI       #1{#1}\fi
\ifx \showISBNx    \undefined \def \showISBNx     #1{\unskip}     \fi
\ifx \showISBNxiii \undefined \def \showISBNxiii  #1{\unskip}     \fi
\ifx \showISSN     \undefined \def \showISSN      #1{\unskip}     \fi
\ifx \showLCCN     \undefined \def \showLCCN      #1{\unskip}     \fi
\ifx \shownote     \undefined \def \shownote      #1{#1}          \fi
\ifx \showarticletitle \undefined \def \showarticletitle #1{#1}   \fi
\ifx \showURL      \undefined \def \showURL       {\relax}        \fi
\providecommand\bibfield[2]{#2}
\providecommand\bibinfo[2]{#2}
\providecommand\natexlab[1]{#1}
\providecommand\showeprint[2][]{arXiv:#2}

\bibitem[\protect\citeauthoryear{Abbe}{Abbe}{2017}]%
        {abbe2017community}
\bibfield{author}{\bibinfo{person}{Emmanuel Abbe}.}
  \bibinfo{year}{2017}\natexlab{}.
\newblock \showarticletitle{Community detection and stochastic block models:
  recent developments}.
\newblock \bibinfo{journal}{\emph{JMLR}} \bibinfo{volume}{18},
  \bibinfo{number}{1} (\bibinfo{year}{2017}), \bibinfo{pages}{6446--6531}.
\newblock


\bibitem[\protect\citeauthoryear{Aicher, Jacobs, and Clauset}{Aicher
  et~al\mbox{.}}{2015}]%
        {aicher2015learning}
\bibfield{author}{\bibinfo{person}{Christopher Aicher},
  \bibinfo{person}{Abigail~Z Jacobs}, {and} \bibinfo{person}{Aaron Clauset}.}
  \bibinfo{year}{2015}\natexlab{}.
\newblock \showarticletitle{Learning latent block structure in weighted
  networks}.
\newblock \bibinfo{journal}{\emph{Journal of Complex Networks}}
  \bibinfo{volume}{3}, \bibinfo{number}{2} (\bibinfo{year}{2015}).
\newblock


\bibitem[\protect\citeauthoryear{Bai, Qian, and Davidson}{Bai
  et~al\mbox{.}}{2018a}]%
        {bai2018discovering}
\bibfield{author}{\bibinfo{person}{Zilong Bai}, \bibinfo{person}{Buyue Qian},
  {and} \bibinfo{person}{Ian Davidson}.} \bibinfo{year}{2018}\natexlab{a}.
\newblock \showarticletitle{Discovering models from structural and behavioral
  brain imaging data}. In \bibinfo{booktitle}{\emph{SIGKDD}}.
  \bibinfo{pages}{1128--1137}.
\newblock


\bibitem[\protect\citeauthoryear{Bai, Walker, and Davidson}{Bai
  et~al\mbox{.}}{2018b}]%
        {bai2018mixtures}
\bibfield{author}{\bibinfo{person}{Zilong Bai}, \bibinfo{person}{Peter Walker},
  {and} \bibinfo{person}{Ian Davidson}.} \bibinfo{year}{2018}\natexlab{b}.
\newblock \showarticletitle{Mixtures of block models for brain networks}. In
  \bibinfo{booktitle}{\emph{SDM}}. \bibinfo{pages}{46--54}.
\newblock


\bibitem[\protect\citeauthoryear{Bai, Walker, Tschiffely, Wang, and
  Davidson}{Bai et~al\mbox{.}}{2017}]%
        {bai2017unsupervised}
\bibfield{author}{\bibinfo{person}{Zilong Bai}, \bibinfo{person}{Peter Walker},
  \bibinfo{person}{Anna Tschiffely}, \bibinfo{person}{Fei Wang}, {and}
  \bibinfo{person}{Ian Davidson}.} \bibinfo{year}{2017}\natexlab{}.
\newblock \showarticletitle{Unsupervised network discovery for brain imaging
  data}. In \bibinfo{booktitle}{\emph{SIGKDD}}. \bibinfo{pages}{55--64}.
\newblock


\bibitem[\protect\citeauthoryear{Belkin and Niyogi}{Belkin and Niyogi}{2002}]%
        {belkin2002laplacian}
\bibfield{author}{\bibinfo{person}{Mikhail Belkin} {and}
  \bibinfo{person}{Partha Niyogi}.} \bibinfo{year}{2002}\natexlab{}.
\newblock \showarticletitle{Laplacian eigenmaps and spectral techniques for
  embedding and clustering}. In \bibinfo{booktitle}{\emph{NIPS}}.
  \bibinfo{pages}{585--591}.
\newblock


\bibitem[\protect\citeauthoryear{Ding, Li, Peng, and Park}{Ding
  et~al\mbox{.}}{2006}]%
        {ding2006orthogonal}
\bibfield{author}{\bibinfo{person}{Chris Ding}, \bibinfo{person}{Tao Li},
  \bibinfo{person}{Wei Peng}, {and} \bibinfo{person}{Haesun Park}.}
  \bibinfo{year}{2006}\natexlab{}.
\newblock \showarticletitle{Orthogonal nonnegative matrix t-factorizations for
  clustering}. In \bibinfo{booktitle}{\emph{SIGKDD}}.
  \bibinfo{pages}{126--135}.
\newblock


\bibitem[\protect\citeauthoryear{Du and Shen}{Du and Shen}{2015}]%
        {du2015unsupervised}
\bibfield{author}{\bibinfo{person}{Liang Du} {and} \bibinfo{person}{Yi-Dong
  Shen}.} \bibinfo{year}{2015}\natexlab{}.
\newblock \showarticletitle{Unsupervised feature selection with adaptive
  structure learning}. In \bibinfo{booktitle}{\emph{SIGKDD}}.
  \bibinfo{pages}{209--218}.
\newblock


\bibitem[\protect\citeauthoryear{Du, Shen, Li, Zhou, and Shen}{Du
  et~al\mbox{.}}{2013}]%
        {du2013local}
\bibfield{author}{\bibinfo{person}{Liang Du}, \bibinfo{person}{Zhiyong Shen},
  \bibinfo{person}{Xuan Li}, \bibinfo{person}{Peng Zhou}, {and}
  \bibinfo{person}{Yi-Dong Shen}.} \bibinfo{year}{2013}\natexlab{}.
\newblock \showarticletitle{Local and global discriminative learning for
  unsupervised feature selection}. In \bibinfo{booktitle}{\emph{IEEE ICDM}}.
  \bibinfo{pages}{131--140}.
\newblock


\bibitem[\protect\citeauthoryear{Dy and Brodley}{Dy and Brodley}{2004}]%
        {dy2004feature}
\bibfield{author}{\bibinfo{person}{Jennifer~G Dy} {and}
  \bibinfo{person}{Carla~E Brodley}.} \bibinfo{year}{2004}\natexlab{}.
\newblock \showarticletitle{Feature selection for unsupervised learning}.
\newblock \bibinfo{journal}{\emph{JMLR}} \bibinfo{volume}{5},
  \bibinfo{number}{Aug} (\bibinfo{year}{2004}), \bibinfo{pages}{845--889}.
\newblock


\bibitem[\protect\citeauthoryear{Ganji, Chan, Stuckey, Bailey, Leckie,
  Ramamohanarao, and Davidson}{Ganji et~al\mbox{.}}{2018}]%
        {ganji2018image}
\bibfield{author}{\bibinfo{person}{Mohadeseh Ganji}, \bibinfo{person}{Jeffrey
  Chan}, \bibinfo{person}{Peter~J Stuckey}, \bibinfo{person}{James Bailey},
  \bibinfo{person}{Christopher Leckie}, \bibinfo{person}{Kotagiri
  Ramamohanarao}, {and} \bibinfo{person}{Ian Davidson}.}
  \bibinfo{year}{2018}\natexlab{}.
\newblock \showarticletitle{Image constrained blockmodelling: a constraint
  programming approach}. In \bibinfo{booktitle}{\emph{SDM}}.
  \bibinfo{pages}{19--27}.
\newblock


\bibitem[\protect\citeauthoryear{Girvan and Newman}{Girvan and Newman}{2002}]%
        {girvan2002community}
\bibfield{author}{\bibinfo{person}{Michelle Girvan} {and}
  \bibinfo{person}{Mark~EJ Newman}.} \bibinfo{year}{2002}\natexlab{}.
\newblock \showarticletitle{Community structure in social and biological
  networks}.
\newblock \bibinfo{journal}{\emph{PNAS}} \bibinfo{volume}{99},
  \bibinfo{number}{12} (\bibinfo{year}{2002}), \bibinfo{pages}{7821--7826}.
\newblock


\bibitem[\protect\citeauthoryear{Gu, Danilevsky, Li, and Han}{Gu
  et~al\mbox{.}}{2012}]%
        {gu2012locality}
\bibfield{author}{\bibinfo{person}{Quanquan Gu}, \bibinfo{person}{Marina
  Danilevsky}, \bibinfo{person}{Zhenhui Li}, {and} \bibinfo{person}{Jiawei
  Han}.} \bibinfo{year}{2012}\natexlab{}.
\newblock \showarticletitle{Locality preserving feature learning}. In
  \bibinfo{booktitle}{\emph{AISTATS}}. \bibinfo{pages}{477--485}.
\newblock


\bibitem[\protect\citeauthoryear{He, Cai, and Niyogi}{He et~al\mbox{.}}{2006}]%
        {he2006laplacian}
\bibfield{author}{\bibinfo{person}{Xiaofei He}, \bibinfo{person}{Deng Cai},
  {and} \bibinfo{person}{Partha Niyogi}.} \bibinfo{year}{2006}\natexlab{}.
\newblock \showarticletitle{Laplacian score for feature selection}. In
  \bibinfo{booktitle}{\emph{NIPS}}. \bibinfo{pages}{507--514}.
\newblock


\bibitem[\protect\citeauthoryear{Hou, Nie, Yi, and Wu}{Hou
  et~al\mbox{.}}{2011}]%
        {hou2011feature}
\bibfield{author}{\bibinfo{person}{Chenping Hou}, \bibinfo{person}{Feiping
  Nie}, \bibinfo{person}{Dongyun Yi}, {and} \bibinfo{person}{Yi Wu}.}
  \bibinfo{year}{2011}\natexlab{}.
\newblock \showarticletitle{Feature selection via joint embedding learning and
  sparse regression}. In \bibinfo{booktitle}{\emph{IJCAI}}.
\newblock


\bibitem[\protect\citeauthoryear{Huang, Li, and Hu}{Huang
  et~al\mbox{.}}{2017}]%
        {Huang-etal17Label}
\bibfield{author}{\bibinfo{person}{Xiao Huang}, \bibinfo{person}{Jundong Li},
  {and} \bibinfo{person}{Xia Hu}.} \bibinfo{year}{2017}\natexlab{}.
\newblock \showarticletitle{Label Informed Attributed Network Embedding}. In
  \bibinfo{booktitle}{\emph{WSDM}}. \bibinfo{pages}{731--739}.
\newblock


\bibitem[\protect\citeauthoryear{Huang, Song, Li, and Hu}{Huang
  et~al\mbox{.}}{2018}]%
        {huang2018exploring}
\bibfield{author}{\bibinfo{person}{Xiao Huang}, \bibinfo{person}{Qingquan
  Song}, \bibinfo{person}{Jundong Li}, {and} \bibinfo{person}{Xia Hu}.}
  \bibinfo{year}{2018}\natexlab{}.
\newblock \showarticletitle{Exploring expert cognition for attributed network
  embedding}. In \bibinfo{booktitle}{\emph{WSDM}}. \bibinfo{pages}{270--278}.
\newblock


\bibitem[\protect\citeauthoryear{Kipf and Welling}{Kipf and Welling}{2016}]%
        {kipf2016semi}
\bibfield{author}{\bibinfo{person}{Thomas~N Kipf} {and} \bibinfo{person}{Max
  Welling}.} \bibinfo{year}{2016}\natexlab{}.
\newblock \showarticletitle{Semi-supervised classification with graph
  convolutional networks}.
\newblock \bibinfo{journal}{\emph{ICLR}} (\bibinfo{year}{2016}).
\newblock


\bibitem[\protect\citeauthoryear{Kohavi, John, et~al\mbox{.}}{Kohavi
  et~al\mbox{.}}{1997}]%
        {kohavi1997wrappers}
\bibfield{author}{\bibinfo{person}{Ron Kohavi}, \bibinfo{person}{George~H
  John}, {et~al\mbox{.}}} \bibinfo{year}{1997}\natexlab{}.
\newblock \showarticletitle{Wrappers for feature subset selection}.
\newblock  (\bibinfo{year}{1997}).
\newblock


\bibitem[\protect\citeauthoryear{Kullback}{Kullback}{1997}]%
        {kullback1997information}
\bibfield{author}{\bibinfo{person}{Solomon Kullback}.}
  \bibinfo{year}{1997}\natexlab{}.
\newblock \bibinfo{booktitle}{\emph{Information theory and statistics}}.
\newblock \bibinfo{publisher}{Courier Corporation}.
\newblock


\bibitem[\protect\citeauthoryear{Li, Cheng, Wang, Morstatter, Trevino, Tang,
  and Liu}{Li et~al\mbox{.}}{2017}]%
        {li2017feature}
\bibfield{author}{\bibinfo{person}{Jundong Li}, \bibinfo{person}{Kewei Cheng},
  \bibinfo{person}{Suhang Wang}, \bibinfo{person}{Fred Morstatter},
  \bibinfo{person}{Robert~P Trevino}, \bibinfo{person}{Jiliang Tang}, {and}
  \bibinfo{person}{Huan Liu}.} \bibinfo{year}{2017}\natexlab{}.
\newblock \showarticletitle{Feature selection: A data perspective}.
\newblock \bibinfo{journal}{\emph{ACM Computing Surveys (CSUR)}}
  \bibinfo{volume}{50}, \bibinfo{number}{6} (\bibinfo{year}{2017}),
  \bibinfo{pages}{1--45}.
\newblock


\bibitem[\protect\citeauthoryear{Li, Guo, Liu, and Liu}{Li
  et~al\mbox{.}}{2019}]%
        {li2019adaptive}
\bibfield{author}{\bibinfo{person}{Jundong Li}, \bibinfo{person}{Ruocheng Guo},
  \bibinfo{person}{Chenghao Liu}, {and} \bibinfo{person}{Huan Liu}.}
  \bibinfo{year}{2019}\natexlab{}.
\newblock \showarticletitle{Adaptive unsupervised feature selection on
  attributed networks}. In \bibinfo{booktitle}{\emph{SIGKDD}}.
  \bibinfo{pages}{92--100}.
\newblock


\bibitem[\protect\citeauthoryear{Li, Hu, Wu, and Liu}{Li et~al\mbox{.}}{2016}]%
        {li2016robust}
\bibfield{author}{\bibinfo{person}{Jundong Li}, \bibinfo{person}{Xia Hu},
  \bibinfo{person}{Liang Wu}, {and} \bibinfo{person}{Huan Liu}.}
  \bibinfo{year}{2016}\natexlab{}.
\newblock \showarticletitle{Robust unsupervised feature selection on networked
  data}. In \bibinfo{booktitle}{\emph{SDM}}. \bibinfo{pages}{387--395}.
\newblock


\bibitem[\protect\citeauthoryear{Li, Yang, Liu, Zhou, and Lu}{Li
  et~al\mbox{.}}{2012}]%
        {li2012unsupervised}
\bibfield{author}{\bibinfo{person}{Zechao Li}, \bibinfo{person}{Yi Yang},
  \bibinfo{person}{Jing Liu}, \bibinfo{person}{Xiaofang Zhou}, {and}
  \bibinfo{person}{Hanqing Lu}.} \bibinfo{year}{2012}\natexlab{}.
\newblock \showarticletitle{Unsupervised feature selection using nonnegative
  spectral analysis}. In \bibinfo{booktitle}{\emph{AAAI}}.
\newblock


\bibitem[\protect\citeauthoryear{Lin}{Lin}{2007}]%
        {lin2007projected}
\bibfield{author}{\bibinfo{person}{Chih-Jen Lin}.}
  \bibinfo{year}{2007}\natexlab{}.
\newblock \showarticletitle{Projected gradient methods for nonnegative matrix
  factorization}.
\newblock \bibinfo{journal}{\emph{Neural computation}} \bibinfo{volume}{19},
  \bibinfo{number}{10} (\bibinfo{year}{2007}), \bibinfo{pages}{2756--2779}.
\newblock


\bibitem[\protect\citeauthoryear{Liu, Morstatter, Tang, and Zafarani}{Liu
  et~al\mbox{.}}{2016}]%
        {liu2016good}
\bibfield{author}{\bibinfo{person}{Huan Liu}, \bibinfo{person}{Fred
  Morstatter}, \bibinfo{person}{Jiliang Tang}, {and} \bibinfo{person}{Reza
  Zafarani}.} \bibinfo{year}{2016}\natexlab{}.
\newblock \showarticletitle{The good, the bad, and the ugly: uncovering novel
  research opportunities in social media mining}.
\newblock \bibinfo{journal}{\emph{International Journal of Data Science and
  Analytics}} \bibinfo{volume}{1}, \bibinfo{number}{3-4}
  (\bibinfo{year}{2016}), \bibinfo{pages}{137--143}.
\newblock


\bibitem[\protect\citeauthoryear{Mattenet, Davidson, Nijssen, and
  Schaus}{Mattenet et~al\mbox{.}}{2019}]%
        {mattenet2019generic}
\bibfield{author}{\bibinfo{person}{Alex Mattenet}, \bibinfo{person}{Ian
  Davidson}, \bibinfo{person}{Siegfried Nijssen}, {and} \bibinfo{person}{Pierre
  Schaus}.} \bibinfo{year}{2019}\natexlab{}.
\newblock \showarticletitle{Generic Constraint-Based Block Modeling Using
  Constraint Programming}. In \bibinfo{booktitle}{\emph{CP}}. Springer,
  \bibinfo{pages}{656--673}.
\newblock


\bibitem[\protect\citeauthoryear{M{\"u}ller, Reinhardt, and
  Strickland}{M{\"u}ller et~al\mbox{.}}{2012}]%
        {muller2012neural}
\bibfield{author}{\bibinfo{person}{Berndt M{\"u}ller}, \bibinfo{person}{Joachim
  Reinhardt}, {and} \bibinfo{person}{Michael~T Strickland}.}
  \bibinfo{year}{2012}\natexlab{}.
\newblock \bibinfo{booktitle}{\emph{Neural networks: an introduction}}.
\newblock \bibinfo{publisher}{Springer Science \& Business Media}.
\newblock


\bibitem[\protect\citeauthoryear{Paul and Chen}{Paul and Chen}{2016}]%
        {paul2016orthogonal}
\bibfield{author}{\bibinfo{person}{Subhadeep Paul} {and} \bibinfo{person}{Yuguo
  Chen}.} \bibinfo{year}{2016}\natexlab{}.
\newblock \showarticletitle{Orthogonal symmetric non-negative matrix
  factorization under the stochastic block model}.
\newblock \bibinfo{journal}{\emph{arXiv preprint arXiv:1605.05349}}
  (\bibinfo{year}{2016}).
\newblock


\bibitem[\protect\citeauthoryear{Pedregosa, Varoquaux, Gramfort, Michel,
  Thirion, Grisel, Blondel, Prettenhofer, Weiss, Dubourg,
  et~al\mbox{.}}{Pedregosa et~al\mbox{.}}{2011}]%
        {pedregosa2011scikit}
\bibfield{author}{\bibinfo{person}{Fabian Pedregosa}, \bibinfo{person}{Ga{\"e}l
  Varoquaux}, \bibinfo{person}{Alexandre Gramfort}, \bibinfo{person}{Vincent
  Michel}, \bibinfo{person}{Bertrand Thirion}, \bibinfo{person}{Olivier
  Grisel}, \bibinfo{person}{Mathieu Blondel}, \bibinfo{person}{Peter
  Prettenhofer}, \bibinfo{person}{Ron Weiss}, \bibinfo{person}{Vincent
  Dubourg}, {et~al\mbox{.}}} \bibinfo{year}{2011}\natexlab{}.
\newblock \showarticletitle{Scikit-learn: Machine learning in Python}.
\newblock \bibinfo{journal}{\emph{JMLR}}  \bibinfo{volume}{12}
  (\bibinfo{year}{2011}), \bibinfo{pages}{2825--2830}.
\newblock


\bibitem[\protect\citeauthoryear{Qian and Zhai}{Qian and Zhai}{2013}]%
        {qian2013robust}
\bibfield{author}{\bibinfo{person}{Mingjie Qian} {and}
  \bibinfo{person}{Chengxiang Zhai}.} \bibinfo{year}{2013}\natexlab{}.
\newblock \showarticletitle{Robust unsupervised feature selection}. In
  \bibinfo{booktitle}{\emph{IJCAI}}.
\newblock


\bibitem[\protect\citeauthoryear{Safari-Alighiarloo, Taghizadeh,
  Rezaei-Tavirani, Goliaei, and Peyvandi}{Safari-Alighiarloo
  et~al\mbox{.}}{2014}]%
        {safari2014protein}
\bibfield{author}{\bibinfo{person}{Nahid Safari-Alighiarloo},
  \bibinfo{person}{Mohammad Taghizadeh}, \bibinfo{person}{Mostafa
  Rezaei-Tavirani}, \bibinfo{person}{Bahram Goliaei}, {and}
  \bibinfo{person}{Ali~Asghar Peyvandi}.} \bibinfo{year}{2014}\natexlab{}.
\newblock \showarticletitle{Protein-protein interaction networks (PPI) and
  complex diseases}.
\newblock \bibinfo{journal}{\emph{Gastroenterol Hepatol Bed Bench}}
  (\bibinfo{year}{2014}).
\newblock


\bibitem[\protect\citeauthoryear{S{\'a}nchez, M{\"u}ller, Korn, B{\"o}hm,
  Kappes, Hartmann, and Wagner}{S{\'a}nchez et~al\mbox{.}}{2015}]%
        {sanchez2015efficient}
\bibfield{author}{\bibinfo{person}{Patricia~Iglesias S{\'a}nchez},
  \bibinfo{person}{Emmanuel M{\"u}ller}, \bibinfo{person}{Uwe~Leo Korn},
  \bibinfo{person}{Klemens B{\"o}hm}, \bibinfo{person}{Andrea Kappes},
  \bibinfo{person}{Tanja Hartmann}, {and} \bibinfo{person}{Dorothea Wagner}.}
  \bibinfo{year}{2015}\natexlab{}.
\newblock \showarticletitle{Efficient algorithms for a robust modularity-driven
  clustering of attributed graphs}. In \bibinfo{booktitle}{\emph{SDM}}.
  \bibinfo{pages}{100--108}.
\newblock


\bibitem[\protect\citeauthoryear{Sen, Namata, Bilgic, Getoor, Galligher, and
  Eliassi-Rad}{Sen et~al\mbox{.}}{2008}]%
        {sen2008collective}
\bibfield{author}{\bibinfo{person}{Prithviraj Sen}, \bibinfo{person}{Galileo
  Namata}, \bibinfo{person}{Mustafa Bilgic}, \bibinfo{person}{Lise Getoor},
  \bibinfo{person}{Brian Galligher}, {and} \bibinfo{person}{Tina Eliassi-Rad}.}
  \bibinfo{year}{2008}\natexlab{}.
\newblock \showarticletitle{Collective classification in network data}.
\newblock \bibinfo{journal}{\emph{AI magazine}} \bibinfo{volume}{29},
  \bibinfo{number}{3} (\bibinfo{year}{2008}), \bibinfo{pages}{93--93}.
\newblock


\bibitem[\protect\citeauthoryear{Shi, Du, and Shen}{Shi et~al\mbox{.}}{2014}]%
        {shi2014robust}
\bibfield{author}{\bibinfo{person}{Lei Shi}, \bibinfo{person}{Liang Du}, {and}
  \bibinfo{person}{Yi-Dong Shen}.} \bibinfo{year}{2014}\natexlab{}.
\newblock \showarticletitle{Robust spectral learning for unsupervised feature
  selection}. In \bibinfo{booktitle}{\emph{IEEE ICDM}}.
  \bibinfo{pages}{977--982}.
\newblock


\bibitem[\protect\citeauthoryear{Tang and Liu}{Tang and Liu}{2012}]%
        {tang2012unsupervised}
\bibfield{author}{\bibinfo{person}{Jiliang Tang} {and} \bibinfo{person}{Huan
  Liu}.} \bibinfo{year}{2012}\natexlab{}.
\newblock \showarticletitle{Unsupervised feature selection for linked social
  media data}. In \bibinfo{booktitle}{\emph{SIGKDD}}.
  \bibinfo{pages}{904--912}.
\newblock


\bibitem[\protect\citeauthoryear{Tang, Qu, Wang, Zhang, Yan, and Mei}{Tang
  et~al\mbox{.}}{2015}]%
        {tang2015line}
\bibfield{author}{\bibinfo{person}{Jian Tang}, \bibinfo{person}{Meng Qu},
  \bibinfo{person}{Mingzhe Wang}, \bibinfo{person}{Ming Zhang},
  \bibinfo{person}{Jun Yan}, {and} \bibinfo{person}{Qiaozhu Mei}.}
  \bibinfo{year}{2015}\natexlab{}.
\newblock \showarticletitle{Line: Large-scale information network embedding}.
  In \bibinfo{booktitle}{\emph{WWW}}. \bibinfo{pages}{1067--1077}.
\newblock


\bibitem[\protect\citeauthoryear{Tang and Liu}{Tang and Liu}{2009}]%
        {tang2009relational}
\bibfield{author}{\bibinfo{person}{Lei Tang} {and} \bibinfo{person}{Huan Liu}.}
  \bibinfo{year}{2009}\natexlab{}.
\newblock \showarticletitle{Relational learning via latent social dimensions}.
  In \bibinfo{booktitle}{\emph{SIGKDD}}. \bibinfo{pages}{817--826}.
\newblock


\bibitem[\protect\citeauthoryear{Taxidou and Fischer}{Taxidou and
  Fischer}{2014}]%
        {taxidou2014online}
\bibfield{author}{\bibinfo{person}{Io Taxidou} {and} \bibinfo{person}{Peter~M
  Fischer}.} \bibinfo{year}{2014}\natexlab{}.
\newblock \showarticletitle{Online analysis of information diffusion in
  twitter}. In \bibinfo{booktitle}{\emph{WWW}}. \bibinfo{pages}{1313--1318}.
\newblock


\bibitem[\protect\citeauthoryear{Wei, Cao, and Philip}{Wei
  et~al\mbox{.}}{2016}]%
        {wei2016unsupervised}
\bibfield{author}{\bibinfo{person}{Xiaokai Wei}, \bibinfo{person}{Bokai Cao},
  {and} \bibinfo{person}{S~Yu Philip}.} \bibinfo{year}{2016}\natexlab{}.
\newblock \showarticletitle{Unsupervised feature selection on networks: a
  generative view}. In \bibinfo{booktitle}{\emph{AAAI}}.
\newblock


\bibitem[\protect\citeauthoryear{Wei, Xie, and Yu}{Wei et~al\mbox{.}}{2015}]%
        {wei2015efficient}
\bibfield{author}{\bibinfo{person}{Xiaokai Wei}, \bibinfo{person}{Sihong Xie},
  {and} \bibinfo{person}{Philip~S Yu}.} \bibinfo{year}{2015}\natexlab{}.
\newblock \showarticletitle{Efficient partial order preserving unsupervised
  feature selection on networks}. In \bibinfo{booktitle}{\emph{SDM}}.
  \bibinfo{pages}{82--90}.
\newblock


\bibitem[\protect\citeauthoryear{Yang, Shen, Ma, Huang, and Zhou}{Yang
  et~al\mbox{.}}{2011}]%
        {yang2011l2}
\bibfield{author}{\bibinfo{person}{Yi Yang}, \bibinfo{person}{Heng~Tao Shen},
  \bibinfo{person}{Zhigang Ma}, \bibinfo{person}{Zi Huang}, {and}
  \bibinfo{person}{Xiaofang Zhou}.} \bibinfo{year}{2011}\natexlab{}.
\newblock \showarticletitle{L2, 1-norm regularized discriminative feature
  selection for unsupervised}. In \bibinfo{booktitle}{\emph{IJCAI}}.
\newblock


\bibitem[\protect\citeauthoryear{Zhang, Yin, Zhu, and Zhang}{Zhang
  et~al\mbox{.}}{2018b}]%
        {zhang2018network}
\bibfield{author}{\bibinfo{person}{Daokun Zhang}, \bibinfo{person}{Jie Yin},
  \bibinfo{person}{Xingquan Zhu}, {and} \bibinfo{person}{Chengqi Zhang}.}
  \bibinfo{year}{2018}\natexlab{b}.
\newblock \showarticletitle{Network representation learning: A survey}.
\newblock \bibinfo{journal}{\emph{IEEE transactions on Big Data}}
  (\bibinfo{year}{2018}).
\newblock


\bibitem[\protect\citeauthoryear{Zhang, Gai, Wang, Cheng, and Liu}{Zhang
  et~al\mbox{.}}{2018a}]%
        {zhang2018equivalence}
\bibfield{author}{\bibinfo{person}{Zhong-Yuan Zhang}, \bibinfo{person}{Yujie
  Gai}, \bibinfo{person}{Yu-Fei Wang}, \bibinfo{person}{Hui-Min Cheng}, {and}
  \bibinfo{person}{Xin Liu}.} \bibinfo{year}{2018}\natexlab{a}.
\newblock \showarticletitle{On equivalence of likelihood maximization of
  stochastic block model and constrained nonnegative matrix factorization}.
\newblock \bibinfo{journal}{\emph{Physica A: Statistical Mechanics and its
  Applications}}  \bibinfo{volume}{503} (\bibinfo{year}{2018}),
  \bibinfo{pages}{687--697}.
\newblock


\bibitem[\protect\citeauthoryear{Zhao and Liu}{Zhao and Liu}{2007}]%
        {zhao2007spectral}
\bibfield{author}{\bibinfo{person}{Zheng Zhao} {and} \bibinfo{person}{Huan
  Liu}.} \bibinfo{year}{2007}\natexlab{}.
\newblock \showarticletitle{Spectral feature selection for supervised and
  unsupervised learning}. In \bibinfo{booktitle}{\emph{ICML}}.
  \bibinfo{pages}{1151--1157}.
\newblock


\bibitem[\protect\citeauthoryear{Zhe, Sun, and Xiao}{Zhe et~al\mbox{.}}{2019}]%
        {zhe2019community}
\bibfield{author}{\bibinfo{person}{Chen Zhe}, \bibinfo{person}{Aixin Sun},
  {and} \bibinfo{person}{Xiaokui Xiao}.} \bibinfo{year}{2019}\natexlab{}.
\newblock \showarticletitle{Community detection on large complex attribute
  network}. In \bibinfo{booktitle}{\emph{SIGKDD}}. \bibinfo{pages}{2041--2049}.
\newblock


\end{thebibliography}

\newpage

\appendix

\section{Proof for Theorem \ref{theorem:closed_form_M}}\label{sec:proof}
\begin{theorem*}[Least Squares Optimal $\mathbf{M}$ in Closed Form]
Given $\mathbf{A} \in R^{n \times n}$, $\mathbf{F} \in \{0,1\}^{n\times k}$.  If $\mathbf{D} = \mathbf{F}^T \mathbf{F}$ is a diagonal matrix with positive diagonal elements, then 
\begin{equation}
\underset{\mathbf{X}}{argmin} \| \mathbf{A} - \mathbf{F} \mathbf{X} \mathbf{F}^T \|^2_F = \mathbf{D}^{-1}\mathbf{F}^T \mathbf{A} \mathbf{F} \mathbf{D}^{-1}
\end{equation}
\end{theorem*}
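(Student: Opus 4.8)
The plan is to treat $\|\mathbf{A} - \mathbf{F}\mathbf{X}\mathbf{F}^T\|_F^2$ as a smooth convex quadratic in the entries of $\mathbf{X}$, set its gradient to zero, and solve the resulting normal equations, exploiting the hypothesis that $\mathbf{D} = \mathbf{F}^T\mathbf{F}$ is diagonal and invertible. First I would expand the Frobenius norm using $\|\mathbf{B}\|_F^2 = \operatorname{tr}(\mathbf{B}^T\mathbf{B})$, writing
\begin{equation}
g(\mathbf{X}) = \operatorname{tr}(\mathbf{A}^T\mathbf{A}) - 2\operatorname{tr}(\mathbf{A}^T\mathbf{F}\mathbf{X}\mathbf{F}^T) + \operatorname{tr}(\mathbf{F}\mathbf{X}^T\mathbf{F}^T\mathbf{F}\mathbf{X}\mathbf{F}^T).
\end{equation}
Then I would take the matrix derivative with respect to $\mathbf{X}$ using the standard identities $\frac{\partial}{\partial \mathbf{X}}\operatorname{tr}(\mathbf{C}\mathbf{X}\mathbf{D}) = \mathbf{C}^T\mathbf{D}^T$ and $\frac{\partial}{\partial \mathbf{X}}\operatorname{tr}(\mathbf{F}\mathbf{X}^T\mathbf{G}\mathbf{X}\mathbf{F}^T) = \mathbf{G}^T\mathbf{X}\mathbf{F}^T\mathbf{F} + \mathbf{G}\mathbf{X}\mathbf{F}^T\mathbf{F}$ (here $\mathbf{G} = \mathbf{F}^T\mathbf{F}$ is symmetric), obtaining the stationarity condition $-2\mathbf{F}^T\mathbf{A}\mathbf{F} + 2\mathbf{F}^T\mathbf{F}\mathbf{X}\mathbf{F}^T\mathbf{F} = \mathbf{0}$, i.e. $\mathbf{D}\mathbf{X}\mathbf{D} = \mathbf{F}^T\mathbf{A}\mathbf{F}$.

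Since $\mathbf{D}$ is diagonal with strictly positive diagonal entries it is invertible, so I can left- and right-multiply by $\mathbf{D}^{-1}$ to get $\mathbf{X} = \mathbf{D}^{-1}\mathbf{F}^T\mathbf{A}\mathbf{F}\mathbf{D}^{-1}$, which is exactly the claimed closed form. To confirm this stationary point is the global minimizer rather than a saddle, I would observe that $g$ is convex in $\mathbf{X}$: the map $\mathbf{X} \mapsto \mathbf{F}\mathbf{X}\mathbf{F}^T$ is linear, so $g$ is a composition of a convex quadratic (squared norm) with a linear map, hence convex; therefore any stationary point is a global minimum. Alternatively, and perhaps more cleanly, I could phrase this as an orthogonal-projection argument: $\operatorname{vec}(\mathbf{F}\mathbf{X}\mathbf{F}^T) = (\mathbf{F}\otimes\mathbf{F})\operatorname{vec}(\mathbf{X})$, so the problem is an ordinary linear least-squares problem $\min_{\mathbf{x}} \|\mathbf{a} - (\mathbf{F}\otimes\mathbf{F})\mathbf{x}\|_2^2$ whose solution satisfies the normal equations $(\mathbf{F}\otimes\mathbf{F})^T(\mathbf{F}\otimes\mathbf{F})\mathbf{x} = (\mathbf{F}\otimes\mathbf{F})^T\mathbf{a}$, i.e. $(\mathbf{D}\otimes\mathbf{D})\operatorname{vec}(\mathbf{X}) = \operatorname{vec}(\mathbf{F}^T\mathbf{A}\mathbf{F})$, and then use $(\mathbf{D}\otimes\mathbf{D})^{-1} = \mathbf{D}^{-1}\otimes\mathbf{D}^{-1}$ together with the identity $\operatorname{vec}(\mathbf{B}\mathbf{C}\mathbf{E}) = (\mathbf{E}^T\otimes\mathbf{B})\operatorname{vec}(\mathbf{C})$ to invert.

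The main obstacle — really the only subtle point — is handling the quadratic term's derivative correctly, since the same matrix $\mathbf{F}$ appears on both sides of $\mathbf{X}$ and one must be careful that $\mathbf{F}^T\mathbf{F}$ is symmetric so the two terms from the product rule combine into $2\mathbf{D}\mathbf{X}\mathbf{D}$ rather than something asymmetric; the Kronecker-product reformulation sidesteps this entirely by reducing everything to the textbook least-squares normal equations, so I would likely present that version as the cleanest route, noting explicitly where the diagonality and positivity of $\mathbf{D}$ are used (to guarantee $\mathbf{F}\otimes\mathbf{F}$ has full column rank, equivalently that $\mathbf{D}\otimes\mathbf{D}$ is invertible, so the minimizer is unique). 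Everything else is routine trace algebra.
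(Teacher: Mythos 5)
Your proposal is correct, and the vectorization route you say you would present — rewriting the problem as $\min_{\mathbf{x}}\|\operatorname{vec}(\mathbf{A})-(\mathbf{F}\otimes\mathbf{F})\mathbf{x}\|_2^2$, solving the normal equations, and using $(\mathbf{D}\otimes\mathbf{D})^{-1}=\mathbf{D}^{-1}\otimes\mathbf{D}^{-1}$ — is exactly the paper's proof in Appendix~\ref{sec:proof}. The trace-calculus derivation you sketch first is an equally valid alternative, but there is no substantive difference from the paper's argument.
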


\begin{proof}
Let $\mathbf{M} = \underset{\mathbf{X}}{Argmin} \| \mathbf{A} - \mathbf{F} \mathbf{X} \mathbf{F}^T \|^2_F$. Then the following equations exist by vectorizing matrices $\mathbf{M}$ and $\mathbf{A}$:
\begin{equation}
\begin{aligned}
vec(M) & =  \underset{\mathbf{x}}{argmin} \| vec(\mathbf{A}) - (\mathbf{F} \otimes \mathbf{F}) \mathbf{x} \|^2_2 \\
& = [(\mathbf{F} \otimes \mathbf{F})^T(\mathbf{F} \otimes \mathbf{F})]^{-1} (\mathbf{F} \otimes \mathbf{F})^T vec(\mathbf{A}) \\ 
& = [(\mathbf{F}^T \mathbf{F})  \otimes (\mathbf{F}^T \mathbf{F})]^{-1} (\mathbf{F} \otimes \mathbf{F})^T vec(\mathbf{A}) \\
& = [\mathbf{D} \otimes \mathbf{D}]^{-1} (\mathbf{F} \otimes \mathbf{F})^T vec(\mathbf{A}) \\ 
& \overset{(*)}{=} [\mathbf{D}^{-1} \otimes \mathbf{D}^{-1}] (\mathbf{F}^T \otimes \mathbf{F}^T) vec(\mathbf{A}) \\
& = [(\mathbf{D}^{-1} \mathbf{F}^T) \otimes (\mathbf{D}^{-1} \mathbf{F}^T) ] vec(\mathbf{A}) \\
\end{aligned}
\end{equation}
where equation (*) sustains because $\mathbf{D} = \mathbf{F}^T \mathbf{F}$ is a diagonal matrix with positive diagonal elements, hence $\mathbf{D}$ is non-singular.
Lastly, rewrite $\mathbf{M}$ and $\mathbf{A}$ into matrices:
\begin{equation}
\mathbf{M} = \mathbf{D}^{-1} \mathbf{F}^T \mathbf{A} \mathbf{F} \mathbf{D}^{-1}
\end{equation}
\end{proof}

\section{Details on experiments}\label{sec:experimental_details}
\subsection{Datasets Preprocessing}\label{sec:dataset_preprocessing}
\begin{itemize}
\item \textbf{BlogCatalog} \cite{tang2009relational, Huang-etal17Label} BlogCatalog is a social network dataset, which contains an undirected graph of $5196$ nodes and $171743$ edges, where each node is a BlogCatalog user, and each edge indicates whether the two users are friends on the website. The attributes are the appearance of $8189$ keywords in the blog descriptions. We obtain the dataset from \url{http://people.tamu.edu/~xhuang/BlogCatalog.mat.zip}.
\item \textbf{Citeseer} \cite{sen2008collective} Citeseer is a citation network dataset, which contains a directed graph of $3327$ nodes where each edge indicates a citation of a paper to another, as well as the appearance of $3703$ words in $3312$ papers out of $3327$ papers in the graph. We obtain the dataset from \url{https://github.com/tkipf/gcn} \cite{kipf2016semi}, and we cross-check the data set with the data set obtained from \url{https://linqs-data.soe.ucsc.edu/public/lbc/citeseer.tgz}. The citation network in the dataset is a directed graph. We transform the graph to an undirected graph by creating $(u, v)$ and $(v, u)$ edges for every edge $(u, v)$ in the directed graph. There are $15$ papers that do not have attributes, we remove the corresponding nodes and edges from the graph. As a result, the Citeseer dataset we use in the experiment contains a citation graph of $3312$ nodes and $4660$ undirected edges.
\item \textbf{Cora} \cite{sen2008collective} Cora is a citation network dataset, which contains a directed graph of 2708 nodes where each edge indicates a citation of a paper to another, and appearance of $1433$ words of all papers in the graph. We obtain the dataset from \url{https://github.com/tkipf/gcn} \cite{kipf2016semi}, and we cross-check the data set with the one obtained from \url{https://linqs-data.soe.ucsc.edu/public/lbc/cora.tgz}. The citation network in the dataset is a directed graph. We transform the graph to an undirected graph by creating $(u, v)$ and $(v, u)$ edges for every edge $(u, v)$ in the directed graph. As a result, the Cora dataset we use in the experiment contains a citation graph of $2708$ nodes and $5278$ undirected edges.
\end{itemize}

\subsection{K-means Clustering} For our experiments, we use
scikit-learn's implementation of K-means clustering algorithm. We also
use sklearn's normalize function to normalize the feature sets before
running the clustering algorithm \cite{pedregosa2011scikit}. For all
experiments involving K-means clustering, we run the algorithm 20 times
to compensate for random initialization.

\subsection{Definitions of Metrics for Clustering Performance Evaluation}\label{appendix:metrics}
\noindent \textbf{Accuracy (ACC)}  
\begin{equation}\label{eq:acc}
ACC = \frac{\underset{i \in [n]}{\Sigma} \mathcal{I}(l_i = \sigma (c_i))}{n}
\end{equation}
where $c_i$ is the clustering result of data point $i$ and $l_i$ is its ground-truth class label. Permutation function $\sigma$ maps $c_i$ to a class label using Kuhn-Munkres Algorithm. We utilize the implementation of ACC from \url{https://github.com/Tony607/Keras_Deep_Clustering/blob/master/metrics.py} in our experiments.

\noindent \textbf{Normalized Mutual Information (NMI)}
\begin{equation}\label{eq:nmi}
NMI(\mathcal{L}, \mathcal{C}) = \frac{MI(\mathcal{L}, \mathcal{C})}{max(H(\mathcal{L}), H(\mathcal{C}))}
\end{equation}
where $H(\mathcal{L})$ and $H(\mathcal{C})$ are respectively the entropy of $\mathcal{L}$ (the node/instance grouping based on class labels) and $\mathcal{C}$ (the instance clustering based on selected features) respectively.

\subsection{Baseline Methods: Links to Source Codes and Setting Hyper-paramters}\label{sec:baseline_codes_hp}
\begin{itemize}
        \item \textbf{MMPOP} \cite{wei2015efficient} 
        \begin{itemize}
        \item \textbf{Link to Source Code:} \url{http://www.cse.lehigh.edu/~sxie/codes/optimization_pop.py}. 
        \item \textbf{Setting parameters:} We keep the default parameters across all experiments.
\end{itemize}
        \item \textbf{NetFS} \cite{li2016robust} 
        \begin{itemize}
        \item \textbf{Link to Source Code:} \url{http://people.virginia.edu/~jl6qk/code/NetFS.zip}. 
        \item \textbf{Setting parameters:} We set $\alpha$ to be $10$ and $\beta$ to be 0.1 according to the original paper's experiments.
        \end{itemize}
        \item \textbf{NDFS} \cite{li2012unsupervised} 
        \begin{itemize}
        \item \textbf{Link to Source Code:} We use skfeature \cite{li2017feature}'s implementation of Nonnegative Discriminative Feature Selection algorithm from \url{https://github.com/jundongl/scikit-feature/blob/master/skfeature/example/test_NDFS.py}
        \item \textbf{Setting parameters:} We utilize the default setting in skfeature's provided example. The number of clusters is set according to the number of distinct ground truth labels in each dataset. The number of neighbors is set to $5$ as suggested in the original paper.
        \end{itemize}
        \item \textbf{SPEC} \cite{zhao2007spectral} 
        \begin{itemize}
        \item \textbf{Link to Source Code:} We use skfeature \cite{li2017feature}'s implementation of Spectral Feature Selection algorithm from \url{https://github.com/jundongl/scikit-feature/blob/master/skfeature/example/test_SPEC.py} 
        \item \textbf{Setting parameters:} We utilize the default setting in skfeature's provided example. The number of clusters is set according to the number of distinct ground truth labels in each dataset.
        \end{itemize}
        \item \textbf{Laplacian Score} \cite{he2006laplacian} 
        \begin{itemize}
        \item \textbf{Link to Source Code:} We use skfeature \cite{li2017feature}'s implementation of Laplacian Score feature selection algorithm from \url{https://github.com/jundongl/scikit-feature/blob/master/skfeature/example/test_lap_score.py}
        \item \textbf{Setting parameters:} We utilize the default setting in the skfeature's provided example. The number of clusters is set according to the number of distinct ground truth labels in each dataset. The number of neighbors is set to $5$ as suggested in the original paper.
        \end{itemize}
\end{itemize}

\end{document}